\documentclass[10pt, conference, letterpaper]{IEEEtran}

\usepackage[utf8]{inputenc}   
\usepackage[T1]{fontenc}    	
\usepackage{url}              		 
\usepackage{booktabs}       	
\usepackage{amsfonts}       	
\usepackage{nicefrac}       	  
\usepackage{microtype}      	
\usepackage[american]{babel}   	

\usepackage{graphicx}
\usepackage{color}
\usepackage{rotating}
\usepackage{tabularx}
\usepackage{pdflscape}

\usepackage{amsmath,amsthm,amssymb}
\usepackage{algorithm, algorithmic}
\usepackage{bbm,dsfont}
\usepackage[caption=false]{subfig}
\usepackage{cancel}
\usepackage{enumerate, cases}
\usepackage{thmtools,thm-restate}
\usepackage{mathtools}

\usepackage[none]{hyphenat}
\usepackage{multirow}
\usepackage{makecell}
\usepackage{setspace}
\usepackage{pifont}
\usepackage{array}
\usepackage{enumitem}

\allowdisplaybreaks

\usepackage{silence}
\usepackage[capitalize]{cleveref}

\newcommand{\Regret}{\kR}

\newcommand{\EE}[1]{\bE\left[#1\right]}

\newcommand{\Var}[1]{\bV\left(#1\right)}

\newcommand{\R}{\bR}
\newcommand{\N}{\bN}

\renewcommand{\phi}{\varphi}
\renewcommand{\epsilon}{\varepsilon}

\newcommand{\al}[1]{ \begin{align} #1  \end{align}}
\newcommand{\eq}[1]{ \begin{equation} #1  \end{equation}}
\newcommand{\als}[1]{ \begin{align*} #1  \end{align*}}
\newcommand{\eqs}[1]{ \begin{equation*} #1  \end{equation*}}

\newcommand{\el}{\end{flushleft}}
\newcommand{\bl}{\begin{flushleft}}

\newcommand{\argmax}{\arg\!\max}

\newcommand{\bE}{\mathbb{E}}

\newcommand{\bN}{\mathbb{N}}

\newcommand{\bR}{\mathbb{R}}

\newcommand{\bV}{\mathbb{V}}

\newcommand{\cA}{\mathcal{A}}
\newcommand{\cB}{\mathcal{B}}

\newcommand{\cH}{\mathcal{H}}

\newcommand{\cN}{\mathcal{N}}

\newcommand{\cP}{\mathcal{P}}

\newcommand{\cV}{\mathcal{V}}

\newcommand{\kR}{\mathfrak{R}}

\theoremstyle{plain}

\newtheorem{rem}{Remark}

\begin{document}

	\title{Stochastic Multi-Armed Bandits with \\Limited Control Variates}
	
	\author{
       \IEEEauthorblockN{Arun Verma}
        \IEEEauthorblockA{
        Singapore-MIT Alliance for \\ Research and Technology Centre\\
        arun.verma@smart.mit.edu}
        \and
        \IEEEauthorblockN{Manjesh Kumar Hanawal}
        \IEEEauthorblockA{
        Department of IEOR\\
        Indian Institute of Technology Bombay\\
        mhanawal@iitb.ac.in}
        \and
        \IEEEauthorblockN{Arun Rajkumar}
        \IEEEauthorblockA{
        Department of DSAI\\
        Indian Institute of Technology Madras\\
        arunr@dsai.iitm.ac.in}
    }
	
	\maketitle
	
	\begin{abstract}
        Motivated by wireless networks where interference or channel state estimates provide partial insight into throughput, we study a variant of the classical stochastic multi-armed bandit problem in which the learner has limited access to auxiliary information. Recent work has shown that such auxiliary information, when available as control variates, can be used to get tighter confidence bounds, leading to lower regret. However, existing works assume that control variates are available in every round, which may not be realistic in several real-life scenarios. To address this, we propose UCB-LCV, an upper confidence bound (UCB) based algorithm that effectively combines the estimators obtained from rewards and control variates. When there is no control variate, UCB-LCV leads to a novel algorithm that we call UCB-NORMAL, outperforming its existing algorithms for the standard MAB setting with normally distributed rewards. Finally, we discuss variants of the proposed UCB-LCV that apply to general distributions and experimentally demonstrate that UCB-LCV outperforms existing bandit algorithms.
	\end{abstract}
	
	\begin{IEEEkeywords}
	 	Stochastic Multi-Armed Bandits, Variance Reduction, Control Variates
	\end{IEEEkeywords}

	\IEEEpeerreviewmaketitle

	\section{Introduction}
	\label{sec:introduction}

We study the classical stochastic Multi-Armed Bandits problem (\cite{NOW12_bubeck2012regret, Book_lattimore2020bandit}) in the presence of additional information. Such information is naturally available in many real-life applications. For example, consider wireless networks: A transmitter can sense the channel to measure the interference level or estimate the channel state before transmitting the information in a given slot.
If measured interference at the transmitter is high, the receiver may also experience higher interference, resulting in a poor signal-to-noise ratio. Similarly, the receiver may experience poor signal reception if the channel state is poor. Both channel state and inference level impact the throughput at the receiver. 
Since both inference level and channel state information, or either of them, can act as additional information on the realized throughput, which is the metric of primary interest. However, channel state and inference level information may not always be available in each time slot, which can restrict the availability of additional information. 
In this work, we treat such additional information as side information and consider the scenario that such side information may only be limited, in the sense that it is not available in each time slot. Similar situations also arise in several other domains, such as recommending cabs with the least wait time to users, e-commerce platforms choosing sellers with the least delivery time to buyers (\cite{TNNLS22_santosh2022multiarmed, COMSNETS22_verma2022exploiting}), and job aggregating platforms \cite{NeurIPS21_verma2021stochastic}.

An interesting direction to model this side information is using the notion of control variates (CVs). Any observed random variable that is correlated with the reward can be considered as CV (\cite{OR82_lavenberg1982statistical, OR90_nelson1990control}).  
In the earlier wireless networks example, inference-level information and channel state information can act as CVs, as both provide additional information about the realized throughput.
Recent work \cite{NeurIPS21_verma2021stochastic} has exploited the availability of such CVs as side information to build estimators with smaller variance and develop an algorithm named UCB-CV that guarantees smaller regret. However, the work assumes that the CVs are available in every round. As seen in the examples above, this is not realistic in several real-world scenarios. Our goal in this paper is to develop an algorithm that exploits such additional but limited CV information.

We relax the assumption that CVs are available with each reward sample. In this setting, UCB-CV \cite{NeurIPS21_verma2021stochastic} is not applicable as CVs are not available in each round. We approach this problem by dividing the observations of rewards and CVs into two sets. The first set contains the observations where only rewards are observed, and the second set contains the observations where the rewards and associated CVs are observed. We use a standard empirical mean reward estimator for the observations in the first set, whereas we use a CV-based reward estimator for the observations in the second set. We propose an algorithm that combines these two estimators to get a better reward estimator. Specifically, our contributions are as follows:
\vspace{-2mm}
\begin{itemize}
	\itemsep0.2em
    \item For normally distributed rewards and CVs, we develop an algorithm, Upper Confidence Bounds with Limited Control Variates (\ref{alg:UCB-LCV}). It combines estimators from rewards samples with and without CVs (in \S~\ref{sec:normal_dist}). We also extended \ref{alg:UCB-LCV} for arms with multiple CVs.
   	
	\item Interestingly, a variant of \ref{alg:UCB-LCV} with no control variates, which we name as UCB-NORMAL, can be used for the stochastic MAB problem with normally distributed rewards. We show that UCB-NORMAL outperforms the classical UCB1-NORMAL algorithm of \cite{ML02_auer2002finite}. 
	
	\item In \S\ref{sec:no_dist}, we adapt \ref{alg:UCB-LCV} for problems where CVs can have non-Gaussian distributions using estimators and associated confidence intervals based on Jackknifing, splitting, and batching methods.
	
	\item We validate the different performance aspects of \ref{alg:UCB-LCV} on synthetically generated problems in \S\ref{sec:experiments}.
\end{itemize}

\subsection{Related Work}
Many bandit works use context as side information to learn the function that maps the optimal arm for given contexts. This bandit setting is popularly known as contextual bandits (\cite{COLT08_dani2008stochastic, MOR10_rusmevichientong2010linearly, WWW10_li2010contextual, NIPS11_abbasi2011improved, AISTATS11_chu2011contextual, NIPS10_filippi2010parametric,ICML17_li2017provably, NIPS17_jun2017scalable, ICML16_zhang2016online}) and has many real-life applications in areas like online advertising, online recommendations, and e-commerce. In contextual bandits, the mean reward of the arm is modeled as a function of the context, which is assumed to be observed \emph{before} playing any arm. Whereas, we consider an unstructured setup where side information available in the form of CVs is observed \emph{after} selecting an arm, and it does not alter the mean of arm rewards. It is just correlated with the reward samples. 

In the stochastic multi-armed bandit setup, the algorithms that exploit variance information are shown to be effective, especially when the arm rewards have unbounded support.    UCB1-NORMAL \cite{ML02_auer2002finite} uses the variance estimates in the index of the arms to achieve a regret of  $\mathcal{O}((\sigma^2/\Delta)\log T)$ when rewards are Gaussian with variance $\sigma^2$, where $\Delta$ is the smallest sub-optimality gap. \cite{ML02_auer2002finite} also experimentally demonstrate that the UCB-Tuned algorithm, which uses variance estimates in arm indices, significantly improves regret performance over UCB1 when rewards are bounded. Extending the idea of using `variance aware' indices, the UCBV \cite{TCS09_audibert2009exploration} algorithm shows that being variance aware is beneficial. EUCBV \cite{AAAI18_mukherjee2018efficient} improves the performance of UCBV using an arm elimination strategy as used in UCB-Improved \cite{PMH10_auer2010ucb}.

 CVs has been extensively for variance reduction (\cite{JORS85_james1985variance, Willy14_botev2014variance,ICML16_chen2016scalable,ACL17_kreutzer2017bandit,ICML19_vlassis2019design}) in the Monte-Carlo simulation of complex systems (\cite{OR82_lavenberg1982statistical,OR90_nelson1990control, MS82_lavenberg1981perspective,EJOR89_nelson1989batch}). \cite{NeurIPS21_verma2021stochastic} adopts these techniques in the stochastic MAB setting and proposes the UCB-CV algorithm assuming that CVs are observed in each round. Our work relaxes this assumption and considers the more practical scenario where CVs may not be available in all the rounds.

	\section{Problem Setting}		
	\label{sec:problem_setting}

We study the stochastic multi-armed bandits problem with control variates (MAB-CV), where the control variates may not be available for each reward sample observed. We follow the notations used in \cite{NeurIPS21_verma2021stochastic}. Let $K$ denote the number of arms and $[K] \doteq \{1, 2, \ldots, K\}$ the set of arms. In round $t$, the environment generates a vector $(\{X_{t,i}\}_{i \in [K]}, \{W_{t,i}\}_{i \in [K]})$. The variable  $X_{t, i}$ denotes the random reward of arm $i$, which is drawn from an unknown but fixed distribution with mean $\mu_i$ and variance $\sigma_i^2$. The reward observations $\left\{X_{s, i}\right\}_{s=1}^t$ form an Independent and Identically Distributed (IID) sequence. 
The random variable $W_{t, i}$ is the control variate (CV) associated with the reward of arm $i$.
It is independent of the rewards of all other arms in the same round, and is also independent of all rewards and control variates from other rounds.
These variables are drawn from an unknown but fixed distribution with mean $\omega_i$ and variance $\sigma_{w, i}^2$. The observations $\left\{W_{s, i}\right\}_{s=1}^t$ form an IID sequence. We assume that the learner knows the values $\{\omega_i\}_{i \in [K]}$\footnote{The control variates can be constructed such that their mean value is known (see Eq. (7) and Eq. (8) of \cite{ACL17_kreutzer2017bandit}). If the mean value is unknown, we estimate it from the samples, but using the estimated means can deteriorate the performance.} but not the $\{\sigma_{w, i}^2\}_{i \in [K]}$. The correlation coefficient between the rewards and their associated control variate for arm $i$ is denoted by $\rho_i$. 
In our problem, the learner observes the reward from the selected arm but may or may not observe the associated CVs. We refer to this new variant of the MAB-CV problem as Multi-Armed Bandits with Limited Control Variates (MAB-LCV). 
The parameter vectors $\boldsymbol{\mu^\sigma} = \{\mu_i, \sigma_i^2\}_{i \in [K]}$, $\boldsymbol{\omega^\sigma} = \{\omega_i, \sigma_{w,i}^2\}_{i \in [K]}$, and $\boldsymbol{\rho} = \{\rho_i\}_{i \in [K]}$ identify an instance of MAB-LCV problem, which is denoted $P = (\boldsymbol{\mu^\sigma}, \boldsymbol{\omega^\sigma}, \boldsymbol{\rho})$. The collections of all MAB-LCV problems are denoted by $\cP$. For an instance $P \in \cP$ with mean reward vector $\boldsymbol{\mu}$, we denote the optimal arm as $i^\star = \argmax_{i \in [K]} \mu_i$.

\begin{algorithm}[!ht]
    \renewcommand{\thealgorithm}{MAB-LCV problem}
	\floatname{algorithm}{}
	\caption{with instance $(\boldsymbol{\mu^\sigma}, \boldsymbol{\omega^\sigma}, \boldsymbol{\rho})$}
	\label{MABwLCV}
	\vspace{2mm}
	In each round $t$: 
	\begin{enumerate}
		\item \textbf{Environment} generates a vector $\boldsymbol{X_t} = (X_{t,1},\ldots,$ $X_{t,K}) \in \R^K$ and $\boldsymbol{W_t} = (W_{t,1},\ldots, W_{t,K}) \in \R^K$, where $\EE{X_{t,i}}=\mu_i$, Var$(X_{t,i}) = \sigma_i^2$, $\EE{W_{t,i}}=\omega_i$, Var$(W_{t,i}) = \sigma_{w,i}^2$, corr$(X_{t,i}, W_{t,i}) = \rho_i$, and  $(X_{t,i},W_{t,i})_{i\in [K]}$ are independent.
		\item \textbf{Learner} selects an arm $I_t \in [K]$ using the past observations of rewards and CV samples until round $t-1$. 
		\item \textbf{Feedback and Regret:} The learner observes a random reward $X_{t,I_t}$ but may or may not observe $W_{t,I_t}$. The learner incurs penalty of $(\mu_{i^\star} - \mu_{I_t})$.
	\end{enumerate}
\end{algorithm}

The interaction between the environment and learner is given in \ref{MABwLCV}. Since the vector $\boldsymbol{\mu}$ is unknown, we estimate $\boldsymbol{\mu}$ in an online fashion using the observations made at each round. Our goal is to learn a policy that collects the maximum reward and measures its performance by comparing its expected cumulative reward with an Oracle that plays the optimal arm in each round. We measure the performance of a policy that selects arm $I_t$ in round $t$ using the past observations of reward and associated control variate samples in terms of regret defined as follows:
$
	\Regret_T = T\mu_{i^\star} - \EE{\sum_{t=1}^{T} X_{t,I_t}}.
$

A policy is good if it has sub-linear expected regret, i.e., ${\Regret_T}/T \rightarrow 0$ as $T \rightarrow \infty$. 
To learn a policy with small sub-linear regret, we use available CVs to get estimated mean rewards with sharper confidence bounds. Therefore, the learner can have a better exploration versus exploitation trade-off and start playing the optimal arm sooner and more frequently.

	\section{Normally Distributed Rewards and CVs}
	\label{sec:normal_dist}

We first study the case where the rewards and associated CVs of arms follow a multivariate normal distribution, and later discuss general distributions in \S\ref{sec:no_dist}. 
To bring out the key ideas, we first consider the case where one control variate is associated with each arm, then extend to multiple CVs in \cref{ssec:multiCV}.
Next, we define an estimator that efficiently combines the rewards and available control variable samples.

\subsection{Mean Reward Estimator}
As the CV may not be available for each reward sample, we maintain two sets of indices for each arm. For arm $i\in [K]$, one set contains the indices of the observations where only reward samples are observed, and we denote it by $\cH_{i}(t)$. Another set contains the indices of observations where both reward and associate CV are observed and denoted by $\cH_{i}^c(t)$. We define an unbiased mean reward estimator for each set of observations.

Let $N_i(t) \in \N$ be the number of reward observations without CV for arm $i \in [K]$ at the beginning of round $t$. Then an unbiased estimator of the mean reward for reward observations in set $\cH_{i}(t)$ is given as follows:
\eq{
    \label{equ:estMeanReward}
	\hat\mu_{N_i(t),i}^{(nc)} = \frac{\sum_{n \in \cH_{i}(t)} X_{n,i}}{N_i(t)}.
}

Let $M_i(t) \in \N$ be the number of reward observations with associated CV for arm $i$ at the beginning of round $t$. We then use the following unbiased estimator \cite{NeurIPS21_verma2021stochastic} of mean reward for observations of reward and associated CV in set $\cH_{i}^c(t)$:
\eq{
    \label{equ:estMeanRewardCV}
	\hat\mu_{M_i(t),i}^{(c)} = \hat\mu_{M_i(t),i} + \hat\beta^\star_{M_i(t),i}\left(\omega_i - \hat\omega_{M_i(t),i}\right),
}
where $\hat\mu_{M_i(t),i} = {\sum_{m \in \cH_{i}^c(t)}X_{m,i}}/{M_i(t)}$, $\hat\omega_{M_i(t),i} = {\sum_{m \in \cH_{i}^c(t)}W_{m,i}}/{M_i(t)}$. The value of $\hat\beta^\star_{M_i(t),i}$ is 
\eq{
    \label{equ:BetaEst}
	\hat\beta^\star_{M_i(t),i}= \frac{\sum_{m \in \cH_{i}^c(t)} (X_{m,i} - \hat\mu_{M_i(t)})(W_{m,i} - \omega_i)}{\sum_{m \in \cH_{i}^c(t)}(W_{m,j}-\omega_m)^2},
}
which is motivated by the theory of Control Variates. It can be shown that this choice of $\hat\beta^\star_{M_i(t),i}$ minimizes the variance of $\hat\mu_{M_i(t)}^c$ while maintaining its unbiasedness property \cite{NeurIPS21_verma2021stochastic}.

Let $S_i(t) = N_i(t) + M_i(t)$ and $\lambda_{t,i} \in (0,1)$ $\forall t \ge 1$ and $i \in [K]$. Now we combine the unbiased mean reward estimators defined in Eq.~\ref{equ:estMeanReward} and Eq.~\ref{equ:estMeanRewardCV} as follows:
\eq{
    \label{equ:estCvxMeanReward}
	\hat\mu_{S_i(t),i} = \lambda_{t,i}\hat\mu_{N_i(t),i}^{(nc)} + (1-\lambda_{t,i})\hat\mu_{M_i(t),i}^{(c)}.
}
The following result discusses the different properties of the new estimator that will be useful to obtain a bound on its deviation form the true mean.
\begin{restatable}{prop}{estProp}
    \label{thm:estProp}
    Let $m=M_i(t)$, $n=N_i(t)$, and $s=S_i(t)$ for arm $i \in [K]$ at the beginning of round $t$. For any $\lambda_{t,i} \in (0,1)$, the mean reward estimator $\hat\mu_{s,i}$ has following properties:
    \begin{enumerate}
        \item $\hat\mu_{s,i}$ is an unbiased estimator, i.e., $\EE{\hat\mu_{s,i}} = \mu_i$,
        \item Variance of $\hat\mu_{s,i}$ is minimum for $\lambda_{t,i} = \frac{\Var{\hat\mu_{m,i}^{(c)}}}{{\Var{\hat\mu_{n,i}^{(nc)}} + \Var{\hat\mu_{m,i}^{(c)}}}}$, where $\Var{X}$ denotes the variance of random variable $X$, 
        \item $\Var{\hat\mu_{s,i}} =  \frac{\Var{\hat\mu_{n,i}^{(nc)}}\Var{\hat\mu_{m,i}^{(c)}}}{\Var{\hat\mu_{n,i}^{(nc)}} + \Var{\hat\mu_{m,i}^{(c)}}} = \frac{(1-\rho_i^2)\sigma_i^2}{m + n(1-\rho_i^2)}$, and
        \item If $\hat\mu_{m+n,i} = \frac{\sum_{m \in \cH_{i}^c(t)} X_{m,i} + \sum_{n \in \cH_{i}(t)} X_{n,i}}{m + n}$ is the mean reward estimator that does not use CV then $$\frac{\Var{\hat\mu_{s,i}}}{\Var{\hat\mu_{m+n,i}}} = \frac{(m+n)(1-\rho_i^2)}{m+n(1-\rho_i^2)}.$$
    \end{enumerate}
\end{restatable}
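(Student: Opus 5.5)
The plan is to treat $\hat\mu_{s,i}$ as a convex combination of two independent unbiased estimators. The sets $\cH_i(t)$ and $\cH_i^c(t)$ index disjoint rounds, and observations from different rounds are independent by the problem setting. Hence $\hat\mu^{(nc)}_{n,i}$ and $\hat\mu^{(c)}_{m,i}$ are independent. I will treat $\lambda_{t,i}$ as deterministic, or at least independent of the samples entering the two estimators, and likewise condition on the counts $n,m$.

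\emph{Parts 1 and 2.} Unbiasedness of $\hat\mu^{(nc)}_{n,i}$ is immediate. Unbiasedness of $\hat\mu^{(c)}_{m,i}$ is the result quoted from \cite{NeurIPS21_verma2021stochastic}; it holds in the Gaussian setting because, conditionally on the CV samples, $\hat\mu_{m,i}-\hat\beta^\star_{m,i}\hat\omega_{m,i}$ has mean $\mu_i-\beta_i\omega_i$ and $\hat\beta^\star$ is conditionally unbiased for the regression slope $\beta_i$. Linearity then gives $\EE{\hat\mu_{s,i}}=\lambda\mu_i+(1-\lambda)\mu_i=\mu_i$. For part 2, independence gives
\[
\Var{\hat\mu_{s,i}}=\lambda^2 V_1+(1-\lambda)^2V_2,
\]
with $V_1=\Var{\hat\mu^{(nc)}_{n,i}}$ and $V_2=\Var{\hat\mu^{(c)}_{m,i}}$. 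This is a strictly convex quadratic in $\lambda$. Setting its derivative to zero gives $\lambda^\star=V_2/(V_1+V_2)$, which lies in $(0,1)$.

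\emph{Part 3.} Substituting $\lambda^\star$ gives $V_1V_2/(V_1+V_2)$, i.e. the harmonic combination $\bigl(1/V_1+1/V_2\bigr)^{-1}$. We have $V_1=\sigma_i^2/n$. For $V_2$ I will use the control-variate variance formula from \cite{NeurIPS21_verma2021stochastic}. With the optimal (known) coefficient $\beta_i=\rho_i\sigma_i/\sigma_{w,i}$, one gets
\[
\Var{\hat\mu_m+\beta_i(\omega_i-\hat\omega_m)}=\frac{(1-\rho_i^2)\sigma_i^2}{m}.
\]
Then
\[
\frac{1}{V_1}+\frac{1}{V_2}=\frac{n(1-\rho_i^2)+m}{(1-\rho_i^2)\sigma_i^2},
\]
which yields the stated expression.

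\emph{Part 4.} Here $\Var{\hat\mu_{m+n,i}}=\sigma_i^2/(m+n)$, and taking the ratio with part 3 gives the claim directly.

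The main obstacle is $V_2$. With the \emph{estimated} $\hat\beta^\star$, the exact Gaussian variance carries an inflation factor, namely $\frac{m-2}{m-3}\cdot\frac{(1-\rho_i^2)\sigma_i^2}{m}$ (the classical Lavenberg--Welch result). The clean formula in part 3 therefore holds only with the optimal coefficient, or asymptotically. My first approach is to state part 3 for the known optimal $\beta_i$, which is the reading that produces the stated formula. I would then note that with $\hat\beta^\star$ one replaces $V_2$ by $\frac{m-2}{m-3}\frac{(1-\rho_i^2)\sigma_i^2}{m}$, and that the structure of parts 2--4 is unchanged.
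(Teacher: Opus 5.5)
Your proposal is correct and follows essentially the same route as the paper. Both arguments use independence of the two estimators and minimize the quadratic $\lambda^2V_1+(1-\lambda)^2V_2$ at $\lambda=V_2/(V_1+V_2)$; both obtain part 3 by plugging in a \emph{fixed} optimal coefficient $\beta=\text{Cov}(X_i,W_i)/\Var{W_i}$, which the paper derives by also differentiating in $\beta$ rather than quoting it, so the clean formula indeed holds only for the known coefficient. Your caveat that the estimated $\hat\beta^\star$ would bring in a Lavenberg--Welch type inflation factor $\frac{m-2}{m-3}$ is accurate, and it is a subtlety the paper's proof passes over without comment.
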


\begin{proof}
    Since $S_i(t) = M_i(t) + N_i(t)$, $s=m+n$. Consider the following combination of mean reward estimators defined in Eq.~\ref{equ:estMeanReward} and Eq.~\ref{equ:estMeanRewardCV} for an arm $i$ and $\lambda_{t,i} \in (0,1)$:
    \eq{
        \label{equ:cvxComEst}
    	\hat\mu_{s,i} = \lambda_{t,i}\hat\mu_{n,i}^{(nc)} + (1-\lambda_{t,i})\hat\mu_{m,i}^{(c)}.
    }
    
    \noindent
    \textbf{Proof of property 1.}
    Since $\hat\mu_{n,i}^{(nc)}$ and $\hat\mu_{m,i}^{(c)}$ are unbiased mean reward estimator and independent to each other, their combination defined in Eq.~\ref{equ:cvxComEst} is also an unbiased mean estimator, i.e., $\EE{\hat\mu_{s,i}} = \mu_{i}$ which follows from Theorem 1 of \cite{Biometrics59_graybill1959combining}. 
    
    \noindent
    \textbf{Proof of property 2.}
    The value of $\lambda_{t,i}$ is chosen such that the variance of the estimator $\hat\mu_{s,i}$ can be minimized. We first compute the variance of $\hat\mu_{s,i}$ for fix $\lambda_{t,i}$ and $\beta$ as follows:
    \als{
        \Var{\hat\mu_{s,i}} &= \Var{\lambda_{t,i}\hat\mu_{n,i}^{(nc)} + (1-\lambda_{t,i})\hat\mu_{m,i}^{(c)}} \\
        & = \Var{\lambda_{t,i}\hat\mu_{n,i}^{(nc)} + (1-\lambda_{t,i})(\hat\mu_{m,i} + \beta(\omega_{i} - \hat\omega_{m,i}))}
    }
    As $\hat\mu_{n,i}^{(nc)}$ and $\hat\mu_{m,i}^{(c)}$ are independent, we have
    \al{
        \implies &\Var{\hat\mu_{s,i}} = \lambda_{t,i}^2\Var{\hat\mu_{n,i}^{(nc)}} + (1-\lambda_{t,i})^2\big(\Var{\hat\mu_{m,i}} + \nonumber \\
        &\qquad\qquad\qquad \beta^2\Var{\hat\omega_{m,i}} - 2\beta\text{Cov}(\hat\mu_{m,i}, \hat\omega_{m,i})\big). \label{equ:varEq1} 
    }
    We first compute $\Var{\hat\mu_{n,i}^{(nc)}}$ as follows:
    \al{
       \Var{\hat\mu_{n,i}^{(nc)}} &= \Var{\frac{\sum_{i \in \cH_i(t)} X_i}{n}} = \frac{1}{n^2} \Var{\sum\nolimits_{i \in \cH_i(t)} X_i} \nonumber \\
       & = \frac{1}{n^2} \sum_{i \in \cH_i(t)}\Var{X_i} = \frac{1}{n^2} \sum_{i \in \cH_i(t)}\Var{X_i} \nonumber \\
       & = \frac{\Var{X_i}}{n}. \quad \left(\text{as $X_i$ are IID}\right) \label{equ:noCVVarEst}
    }   
    Similarly, we compute $\Var{\hat\mu_{m,i}}$ as follows:
    \als{
       \Var{\hat\mu_{m,i}} & = \Var{\frac{\sum_{i \in \cH_i^c(t)} X_i}{m}} = \frac{1}{m^2} \Var{\sum_{i \in \cH_i^c(t)} X_i} \\
       & = \frac{1}{m^2} \sum_{i \in \cH_i^c(t)}\Var{X_i} = \frac{\Var{X_i}}{m}. \quad \left(\text{as $X_i$ are IID}\right)
    } 
    We now compute $\Var{\hat\omega_{m,i}}$ as follows:
    \als{
       \Var{\hat\omega_{m,i}} &= \Var{\frac{\sum_{i \in \cH_i^c(t)} W_i}{m}} = \frac{1}{m^2} \Var{\sum_{i \in \cH_i^c(t)} W_i} \\
       & = \frac{1}{m^2} \sum_{i \in \cH_i^c(t)}\Var{W_i} = \frac{\Var{W_i}}{m}. \quad \left( \text{as $W_i$ are IID}\right)
    }
    Next we compute $\text{Cov}(\hat\mu_{m,i}, \hat\omega_{m,i})$ as follows:
    \als{
       \text{Cov}(\hat\mu_{m,i}, \hat\omega_{m,i}) &= \text{Cov}\left(\frac{\sum_{i \in \cH_i^c(t)} X_i}{m}, \frac{\sum_{i \in \cH_i^c(t)} W_i}{m} \right) \\
       & = \frac{1}{m^2} \text{Cov}\left(\sum\nolimits_{i \in \cH_i^c(t)} X_i, \sum\nolimits_{i \in \cH_i^c(t)} W_i \right).
    }
    As $X_i$ and $W_j$ are independent if $i \neq j$,
    \als{
       \text{Cov}(\hat\mu_{m,i}, \hat\omega_{m,i}) = \frac{1}{m^2} \hspace{-1mm}\sum_{i \in \cH_i^c(t)}\hspace{-1mm} \text{Cov}\left(X_i, W_i\right) = \frac{\text{Cov}\left(X_i, W_i\right)}{m}.
    }
    Putting the values of $\Var{\hat\mu_{n,i}^{(nc)}}$, $\Var{\hat\mu_{m,i}}$, $\Var{\hat\omega_{m,i}}$, and $\text{Cov}(\hat\mu_{m,i}, \hat\omega_{m,i})$ in Eq.~\ref{equ:varEq1}, we have
    \al{
        &\Var{\hat\mu_{s,i}} = \lambda_{t,i}^2 \frac{\Var{X_i}}{n} + (1-\lambda_{t,i})^2\bigg(\frac{\Var{X_i}}{m}\ + \nonumber\\
        &\qquad\qquad\qquad \beta^2\frac{\Var{W_i}}{m} - 2\beta\frac{\text{Cov}\left(X_i, W_i\right)}{m} \bigg). \label{equ:varEq2} 
    }
    Let us first find the optimal value of $\beta$ as follows:
    \al{
       &\frac{\partial \Var{\hat\mu_{s,i}}}{\partial \beta} = 0 \nonumber \\
       \implies& 0 + (1-\lambda_{t,i})^2\left(2\beta\frac{\Var{W_i}}{m} - 2\frac{\text{Cov}\left(X_i, W_i\right)}{m} \right) = 0 \nonumber \\
       \implies&2\beta\frac{\Var{W_i}}{m} = 2\frac{\text{Cov}\left(X_i, W_i\right)}{m} \nonumber \\
       \implies&\beta = \frac{\text{Cov}\left(X_i, W_i\right)}{\Var{W_i}}. \nonumber
    }
    Since $\lambda_{t,i} <1$ and $\Var{W_i} > 0$, we verify $\beta = \frac{\text{Cov}\left(X_i, W_i\right)}{\Var{W_i}}$ minimize $\Var{\hat\mu_{s,i}}$ as follows:
    \als{
       &\frac{\partial^2 \Var{\hat\mu_{s,i}}}{\partial \beta^2} = \frac{2(1-\lambda_{t,i})^2\Var{W_i}}{m} > 0. 
    }
    Let $A = \frac{\Var{X_i}}{n}$ be the $\Var{\hat\mu_{n,i}^{(nc)}}$ and $B=\frac{\Var{X_i}}{m} + \beta^2\frac{\Var{W_i}}{m} - 2\beta\frac{\text{Cov}\left(X_i, W_i\right)}{m}$ be the $\Var{\hat\mu_{m,i}^{(c)}}$ in Eq.~\ref{equ:varEq2}. Then we find the optimal value of $\lambda_{t,i}$ as follows:
    \als{
       &\frac{\partial \Var{\hat\mu_{s,i}}}{\partial \lambda_{t,i}} = 0 
       \implies 2\lambda_{t,i} A + 2(1-\lambda_{t,i}) (-1)B= 0 \\
       \implies&\lambda_{t,i} (A + B) - B = 0 
       \implies \lambda_{t,i}  = \frac{B}{A + B}.
    }
    As variance is positive, we verify $\lambda_{t,i} = \frac{B}{A+B}$ minimizes $\Var{\hat\mu_{s,i}}$ by checking
    $
       \frac{\partial^2 \Var{\hat\mu_{s,i}}}{\partial \lambda_{t,i}^2} = A+B > 0.
    $
    After replacing $A$ and $B$ by their values, the variance of $\hat\mu_{s,i}$ is minimum for $\lambda_{t,i} = \frac{\Var{\hat\mu_{m,i}^{(c)}}}{{\Var{\hat\mu_{n,i}^{(nc)}} + \Var{\hat\mu_{m,i}^{(c)}}}}.$
    
    \noindent
    \textbf{Proof of property 3.}
    The optimal value of $\beta$ does not depend on $\lambda_{t,i}$ and only the value of $B$ depends on $\beta$. Then we can find the minimum variance of the estimator as follows:
    \al{
        \Var{\hat\mu_{s,i}} &= \lambda_{t,i}^2 A + (1-\lambda_{t,i})^2 B \nonumber \\
        &=  \left(\frac{B}{A + B}\right)^2 A + \left( 1 -  \frac{B}{A + B} \right)^2 B \nonumber \\
        & = \frac{AB^2}{(A+B)^2} + \frac{A^2B}{(A+B)^2}  = \frac{AB^2 + A^2B}{(A+B)^2} \nonumber \\
        \implies \Var{\hat\mu_{s,i}} &= \frac{AB}{A+B}. \label{equ:varEq3}
    }
    Let $\rho_i$ be the correlation coefficient between $X_i$ and $W_i$, i.e, $\rho_i=\frac{\text{Cov}\left(X_i, W_i\right)}{\sqrt{\Var{X_i}\Var{W_i}}}$. The value of $B$ for $\beta = \frac{\text{Cov}\left(X_i, W_i\right)}{\Var{W_i}}$ is
    \als{
        B &=  \frac{\Var{X_i}}{m} + \beta^2\frac{\Var{W_i}}{m} - 2\beta\frac{\text{Cov}\left(X_i, W_i\right)}{m} \\
        &= \frac{1}{m} \bigg( \Var{X_i} + \left(\frac{\text{Cov}\left(X_i, W_i\right)}{\Var{W_i}} \right)^2\Var{W_i} \\
        &\qquad\qquad\qquad - 2\frac{\text{Cov}\left(X_i, W_i\right)}{\Var{W_i}}\text{Cov}\left(X_i, W_i\right) \bigg) \\
        &= \frac{1}{m} \left( \Var{X_i} + \frac{\text{Cov}\left(X_i, W_i\right)^2}{\Var{W_i}} - 2\frac{\text{Cov}\left(X_i, W_i\right)^2}{\Var{W_i}} \right) \\
        &= \frac{1}{m} \left( \Var{X_i} - \frac{\text{Cov}\left(X_i, W_i\right)^2}{\Var{X_i}\Var{W_i}} \Var{X_i}\right) \\
        &= \frac{1}{m} \left( \Var{X_i} - \rho_i^2\Var{X_i}\right) 
        = \frac{(1-\rho_i^2)\Var{X_i}}{m}.
    }
    Replacing $A= \frac{\Var{X_i}}{n}$ and $B=\frac{(1-\rho_i^2)\Var{X_i}}{m}$ in \ref{equ:varEq3}, we have
    \als{
        \Var{\hat\mu_{s,i}} &= \frac{AB}{A+B} = \frac{\frac{\Var{X_i}}{n}\frac{(1-\rho_i^2)\Var{X_i}}{m}}{\frac{\Var{X_i}}{n} + \frac{(1-\rho_i^2)\Var{X_i}}{m}} \\
        &= \frac{\frac{(1-\rho_i^2)\Var{X_i}}{nm}}{\frac{m + n(1-\rho_i^2)}{nm}} = \frac{(1-\rho_i^2)\Var{X_i}}{m + n(1-\rho_i^2)}. 
    }
    As $\sigma_i^2= \Var{X_i}$, we have $\Var{\hat\mu_{s,i}} = \frac{(1-\rho_i^2)\sigma_i^2}{m + n(1-\rho_i^2)}$.

	\noindent
    \textbf{Proof of property 4.}
    As $\Var{\hat\mu_{m+n,i}}$ is the variance of an estimator that does not use control variates. Then we have
    \als{
        \frac{\Var{\hat\mu_{s,i}}}{\Var{\hat\mu_{m+n,i}}} &= \frac{\frac{(1-\rho_i^2)\sigma_i^2}{m + n(1-\rho_i^2)}}{\Var{\frac{\sum_{i \in \cH_i(t)}  X_i + \sum_{j \in \cH_i^c(t)} X_j}{m+n}}} \\
        \intertext{As $X_i$ and $X_j$ are IID, and $\sigma_i^2 = \Var{X_i}$, we get}
        \frac{\Var{\hat\mu_{s,i}}}{\Var{\hat\mu_{m+n,i}}} & = \frac{\frac{(1-\rho_i^2)\sigma_i^2}{m + n(1-\rho_i^2)}}{\frac{\Var{X_i}}{m+n}} = \frac{(m+n)(1-\rho_i^2)}{m+n(1-\rho_i^2)}. \qedhere
    }
\end{proof}

The last property quantifies the reduction in variance of the estimator when it uses the available control variates information. 
The following result gives an unbiased variance estimator for the mean reward estimators defined in Eqs. \ref{equ:estMeanReward} and \ref{equ:estMeanRewardCV}, when the rewards and CVs follow a multivariate normal distribution.
\begin{restatable}{lem}{varEst}
	\label{lem:varEst}
	 Let the reward and control variate of an arm $i$ have a multivariate normal distribution. Define
	\als{
    	&A_{t,i} = \frac{\sum_{n \in \cH_{i}(t)} ({X}_{n,i} - \hat\mu_{N_i(t),i}^{(nc)})^2 }{N_i(t)(N_i(t)-1)}, \\
    	&Z_{t,i} = \left(1 - \frac{\left(\sum_{m \in \cH_{i}^c(t)} (W_{m,i} - \omega_i)\right)^2}{M_i(t)\sum_{m \in \cH_{i}^c(t)} (W_{m,i} - \omega_i)^2}\right)^{-1}, \\
    	&\bar{X}_{m,i} = X_{m,i} + \hat\beta^\star_{M_i(t),i}(\omega_i - W_{m,i}), 
    	\text{ and}\\
    	&B_{t,i} =  \frac{Z_{t,i}\sum_{m \in \cH_{i}^c(t)} (\bar{X}_{m,i} - \hat\mu_{M_i(t),i}^{(c)})^2}{M_i(t)(M_i(t)-2)}.
    }
	$A_{t,i}$ and $B_{t,i}$ are the unbiased variance estimator of $\hat\mu_{N_i(t),i}^{(nc)}$ and $\hat\mu_{M_i(t),i}^{(c)}$, respectively, i.e., $\EE{A_{t,i}}=\Var{\hat\mu_{N_i(t),i}^{(nc)}}$ and $\EE{B_{t,i}}=\Var{\hat\mu_{M_i(t),i}^{(c)}}$.
\end{restatable}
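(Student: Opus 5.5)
The two claims are handled separately. For $A_{t,i}$ the argument is the classical one. Conditional on $N_i(t)=n$ (with the index set treated as given, the rewards $\{X_{n,i}\}_{n\in\cH_i(t)}$ are IID with variance $\sigma_i^2$), the sample variance $\frac{1}{n-1}\sum_{n\in\cH_i(t)}(X_{n,i}-\hat\mu^{(nc)}_{n,i})^2$ is unbiased for $\sigma_i^2$. This follows from expanding $\sum (X-\bar X)^2=\sum (X-\mu)^2 - n(\bar X-\mu)^2$. Dividing by $n$ gives $\EE{A_{t,i}}=\sigma_i^2/n=\Var{\hat\mu^{(nc)}_{n,i}}$, which is exactly the quantity computed in Eq.~\ref{equ:noCVVarEst} in the proof of \cref{thm:estProp}. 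Normality is not even needed here.

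For $B_{t,i}$ I will treat the control-variate estimator as a regression with known intercept location. Condition on the CV samples $\mathbf W=(W_{m,i})_{m\in\cH^c_i(t)}$ and write $m=M_i(t)$. By joint normality,
\[
X_{m,i}=\mu_i+\beta(W_{m,i}-\omega_i)+\epsilon_m, \qquad \beta=\mathrm{Cov}(X_i,W_i)/\Var{W_i},
\]
where the $\epsilon_m$ are IID $\cN(0,(1-\rho_i^2)\sigma_i^2)$ and independent of $\mathbf W$.

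Set $V_m=W_{m,i}-\omega_i$. Then $(\hat\mu^{(c)}_{m,i},\hat\beta^\star_{m,i})$ are the least squares estimates of $(\mu_i,\beta)$ in the linear model with design rows $(1,V_m)$, and $\bar X_{m,i}-\hat\mu^{(c)}_{m,i}$ are the residuals. I need to check that the $\hat\beta^\star$ of Eq.~\ref{equ:BetaEst}, centered at $\omega_i$ in the denominator and at $\hat\mu$ in the numerator, matches the OLS normal equations. If it matches only up to a centering convention, I will reconcile that first, since the rest of the argument depends on it. Once this is settled, standard Gaussian linear model theory with $p=2$ parameters gives
\[
\EE{\sum(\bar X_{m,i}-\hat\mu^{(c)}_{m,i})^2\,\middle|\,\mathbf W}=(m-2)(1-\rho_i^2)\sigma_i^2
\]
and
\[
\Var{\hat\mu^{(c)}_{m,i}\mid \mathbf W}=(1-\rho_i^2)\sigma_i^2\,[(D^\top D)^{-1}]_{11}.
\]
Computing the $2\times 2$ inverse gives
\[
[(D^\top D)^{-1}]_{11}=\frac{\sum V_m^2}{m\sum V_m^2-(\sum V_m)^2}=\frac{Z_{t,i}}{m}.
\]
Since $Z_{t,i}$ is $\mathbf W$-measurable, $\EE{B_{t,i}\mid\mathbf W}=Z_{t,i}(1-\rho_i^2)\sigma_i^2/m=\Var{\hat\mu^{(c)}_{m,i}\mid\mathbf W}$.

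To finish, I uncondition. Conditional unbiasedness of $\hat\mu^{(c)}$ gives $\EE{\hat\mu^{(c)}\mid\mathbf W}=\mu_i$, so the law of total variance yields $\Var{\hat\mu^{(c)}}=\EE{\Var{\hat\mu^{(c)}\mid\mathbf W}}=\EE{B_{t,i}}$. Note that the resulting unconditional variance is $\frac{(1-\rho_i^2)\sigma_i^2}{m}\EE{Z_{t,i}}$, which is slightly larger than the idealized $B$ in \cref{thm:estProp} with known $\beta$. The claim is about $\hat\mu^{(c)}$ itself, so this is consistent.

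The main obstacle is the regression identification step: verifying that $\hat\beta^\star$ and $\hat\mu^{(c)}$ coincide with the OLS fit, given that the numerator centers $X$ at $\hat\mu$ while $W$ is centered at the known $\omega_i$. The residual sum of squares has $m-2$ degrees of freedom, with the normalization $M_i(t)-2$, only if both parameters are fitted by least squares. I would first verify the normal equations directly. If they fail, I would instead compute $\EE{\sum(\bar X-\hat\mu^{(c)})^2\mid\mathbf W}$ by hand as the trace of the corresponding idempotent residual projection.
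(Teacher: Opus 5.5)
Your argument for $A_{t,i}$ is the paper's. For $B_{t,i}$ the paper gives no argument of its own: it cites Lemma~1 of the UCB-CV paper, which rests on Nelson's Gaussian control-variate result \cite{OR90_nelson1990control}. Your conditional-regression argument is the standard derivation behind that result. Each of its computations is correct \emph{for the least-squares fit}: $\EE{\mathrm{RSS}\mid\mathbf W}=(m-2)(1-\rho_i^2)\sigma_i^2$, $[(D^\top D)^{-1}]_{11}=Z_{t,i}/m$, and the total-variance step.

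The identification step you postponed does fail, however. Write $V_m=W_{m,i}-\omega_i$ and $\bar V=\hat\omega_{M_i(t),i}-\omega_i$. The normal equations give the slope $\sum_m (X_{m,i}-\hat\mu_{M_i(t),i})V_m/\sum_m (V_m-\bar V)^2$, while Eq.~\eqref{equ:BetaEst} divides by $\sum_m V_m^2$. So, as printed, $\hat\beta^\star=c\,\hat\beta_{\mathrm{OLS}}$ with $c=\sum_m(V_m-\bar V)^2/\sum_m V_m^2=1/Z_{t,i}<1$ a.s. Your fallback cannot repair this, for two reasons:
\begin{itemize}
\item The residual vector is $RX$ with $R=P-uu^\top/\|V\|^2$, $P=I-\frac1m\mathbf 1\mathbf 1^\top$ and $u=PV$. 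This $R$ is not idempotent: $\mathrm{tr}(R^2)=m-2+(1-c)^2$.
\item $\EE{\hat\mu^{(c)}_{M_i(t),i}\mid\mathbf W}=\mu_i+\beta\bar V(1-c)\neq\mu_i$, so the law-of-total-variance step also breaks.
\end{itemize}

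In fact, for the literal Eq.~\eqref{equ:BetaEst} the claim is false. Take $\rho_i=0$, so $\beta=0$ and the conditional mean is $\mu_i$. Then $\EE{B_{t,i}\mid\mathbf W}=\frac{\sigma_i^2}{mc}\bigl(1+\frac{(1-c)^2}{m-2}\bigr)$, whereas $\Var{\hat\mu^{(c)}_{M_i(t),i}\mid\mathbf W}=\frac{\sigma_i^2}{m}\bigl(1+c(1-c)\bigr)$. Since $\frac1c-1-c(1-c)=\frac{(1-c)^2(1+c)}{c}>0$, it follows that $\EE{B_{t,i}}>\Var{\hat\mu^{(c)}_{M_i(t),i}}$.

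So the missing step is a choice of estimator, not a computation. You must take $\hat\beta^\star$ to be the OLS slope, with the denominator centered at $\hat\omega_{M_i(t),i}$. This is the $q=1$ case of the multi-CV formula in \S\ref{ssec:multiCV}, read as $(\boldsymbol W_i^\top\boldsymbol W_i-M_i(t)\boldsymbol{\hat\omega}\boldsymbol{\hat\omega}^\top)^{-1}(\boldsymbol W_i^\top\boldsymbol X_i-M_i(t)\boldsymbol{\hat\omega}\hat\mu)$. It is also the estimator for which the cited lemma and the factor $Z_{t,i}$ are valid. With that choice made explicit, your proof is complete and self-contained, whereas the paper only cites the result.

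A minor addendum: $\EE{Z_{t,i}}=\frac{m-2}{m-3}$, so both sides of the identity are finite only for $M_i(t)\ge 4$. The algorithm's threshold $M=q+2=3$ falls below this.
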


\begin{proof}
    The unbiased variance estimator of $\Var{X_i}$ using observations in $\cH_i(t)$ is $\Var{X_i} = \frac{\sum_{n \in \cH_{i}(t)} ({X}_{n,i} - \hat\mu_{N_i(t),i}^{(nc)})^2 }{N_i(t)-1}$. From Eq.~\ref{equ:noCVVarEst}, it is easy to follow that $A_{t,i}$ is unbiased estimator of $\Var{\hat\mu_{N_i(t),i}^{(nc)}}$. The $B_{t,i}$ is unbiased estimator  of $\Var{\hat\mu_{M_i(t),i}^{(c)}}$ follows from Lemma~1 of \cite{NeurIPS21_verma2021stochastic}.
\end{proof}

The value of $\lambda_{t,i}$ is estimated from the observed rewards and available control variate samples, i.e.,
$
    \hat\lambda_{t,i} = \frac{B_{t,i}}{A_{t,i}+B_{t,i}},
$
where $\cA_{t,i}$ and $\cB_{t,i}$ are same as defined in Lemma~\ref{lem:varEst}.
The variance of the mean reward estimators is inversely proportional to the number of observations they use (Lemma~\ref{lem:varEst}). The value of $\hat\lambda_{t,i}$ increases as $A_{t,i}$ decreases, and $A_{t,i}$ decreases as the number of observations without control variates ($N_i(t)$) increase. Therefore, as the fraction of $N_i(t)$ in $S_i(t)$ increases, the value of $\hat\lambda_{t,i}$ increases. By following the similar arguments, it is easy to see that the value of $\hat\lambda_{t,i}$ decreases as the fraction of observations with control variates ($M_i(t)$) in $S_i(t)$ increases. This way, the mean reward estimator (Eq.~\ref{equ:estCvxMeanReward}) gives more weights to the estimator that has a smaller variance. 

Our next result shows that using the estimated value of $\lambda_{t,i}$ in Eq.~\ref{equ:estCvxMeanReward} still leads to an unbiased reward estimator.
\begin{restatable}{lem}{rewardEst}
	\label{lem:rewardEst}
	Let the value of $\lambda_{t,i}$ be ${B_{t,i}}/\left({A_{t,i}+B_{t,i}}\right)$ in Eq.~\ref{equ:estCvxMeanReward}. Then, $\hat\mu_{S_i(t),i}^{(n)}$ is an unbiased mean reward estimator.
\end{restatable}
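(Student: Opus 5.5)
Because $\hat\lambda_{t,i}$ is random, the combination argument in property~1 of Proposition~\ref{thm:estProp} (the Graybill--Deal result) does not apply directly. The plan is to show that the random weight is independent of the two centered estimators and then use symmetry. Concretely, the approach is to condition on the weight and show that, under the multivariate normal model, each of $\hat\mu_{N_i(t),i}^{(nc)}$ and $\hat\mu_{M_i(t),i}^{(c)}$ is conditionally unbiased given $\hat\lambda_{t,i}$. Throughout, treat the counts $n=N_i(t)$ and $m=M_i(t)$ and the index sets as fixed, i.e., condition on which rounds revealed a CV. Write
\[
\hat\mu_{S_i(t),i}-\mu_i=\hat\lambda_{t,i}\bigl(\hat\mu_{n,i}^{(nc)}-\mu_i\bigr)+(1-\hat\lambda_{t,i})\bigl(\hat\mu_{m,i}^{(c)}-\mu_i\bigr).
\]
It then suffices to show $\EE{\hat\lambda_{t,i}(\hat\mu_{n,i}^{(nc)}-\mu_i)}=0$ and $\EE{(1-\hat\lambda_{t,i})(\hat\mu_{m,i}^{(c)}-\mu_i)}=0$. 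Since $\hat\lambda_{t,i}\in(0,1)$ is bounded, all expectations exist.

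\textbf{Step 1 (no-CV sample).} The sample $\{X_{n,i}\}_{n\in\cH_i(t)}$ is IID Gaussian, so by the classical independence of sample mean and sample variance (Cochran's theorem), $\hat\mu_{n,i}^{(nc)}$ is independent of $A_{t,i}$. The sample in $\cH_i(t)$ is also independent of the sample in $\cH_i^c(t)$, so $\hat\mu_{n,i}^{(nc)}$ is independent of $B_{t,i}$. Hence $\hat\lambda_{t,i}=B_{t,i}/(A_{t,i}+B_{t,i})$ is independent of $\hat\mu_{n,i}^{(nc)}$. The first expectation therefore factorizes and vanishes.

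\textbf{Step 2 (CV sample).} Condition on the CV values $\{W_{m,i}\}_{m\in\cH_i^c(t)}$. Under joint normality, $X_{m,i}=\mu_i+\beta^\star(W_{m,i}-\omega_i)+\epsilon_m$ with $\epsilon_m$ IID $\mathcal N(0,(1-\rho_i^2)\sigma_i^2)$, independent of the $W$'s. Given the $W$'s, $\hat\mu_{m,i}^{(c)}$ is the intercept of an OLS regression of $X$ on the known regressor $W-\omega_i$, evaluated at $W=\omega_i$. The residual sum of squares $\sum(\bar X_{m,i}-\hat\mu^{(c)}_{m,i})^2$ is independent of the OLS coefficients, by the standard Gaussian linear-model fact. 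Also, $Z_{t,i}$ is a function of the $W$'s only. Therefore, conditionally on the $W$'s, $B_{t,i}$ is independent of $\hat\mu_{m,i}^{(c)}$, and it is also independent of $A_{t,i}$. Moreover, $\EE{\hat\mu_{m,i}^{(c)}\mid W}=\mu_i$ because OLS is conditionally unbiased. Hence
\[
\EE{(1-\hat\lambda_{t,i})(\hat\mu_{m,i}^{(c)}-\mu_i)\mid W}=\EE{1-\hat\lambda_{t,i}\mid W}\cdot 0=0,
\]
and the tower property gives the second term.

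Combining both steps gives $\EE{\hat\mu_{S_i(t),i}}=\mu_i$. The main obstacle is Step~2. The decomposition of $\hat\mu^{(c)}_{m,i}$ as a regression intercept needs care, because $\hat\beta^\star$ in Eq.~\ref{equ:BetaEst} is centered at $\hat\mu$ for $X$ but at the known $\omega_i$ for $W$. I would first check that this $\hat\beta^\star$ is exactly the OLS slope of the corresponding design, with an intercept column and the regressor $W-\omega_i$. If it is not, I would fall back on a symmetry argument: given $W$, the vector $(\epsilon_m)$ is sign-symmetric, so $\hat\mu^{(c)}_{m,i}-\mu_i$ is odd in $\epsilon$ while $B_{t,i}$ is even. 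That yields the zero expectation without needing full independence, and the same parity trick also backs up Step~1.

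The issue of $n,m$ being chosen adaptively by the algorithm I would set aside, as the paper does for the fixed-sample properties in Proposition~\ref{thm:estProp}.
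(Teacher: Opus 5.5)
Your strategy is the paper's: show that the random weight does not bias either piece. Step~1 is exactly the paper's argument. The paper then applies the same ``Gaussian sample mean is independent of sample variance'' remark to the CV piece as well, which does not really cover $\hat\mu^{(c)}_{m,i}$. That estimator is not even independent of $\hat\lambda_{t,i}$, since both depend on the $W$'s through $Z_{t,i}$; only mean-independence holds. Your Step~2 (condition on $W$, then use conditional unbiasedness of OLS and independence of coefficients and residual sum of squares) is the right way to make this rigorous, \emph{provided} $\hat\beta^\star$ is the OLS slope.

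That proviso is where the proposal has a gap, because the check you flag fails for Eq.~\eqref{equ:BetaEst} as written. Abbreviate $\hat\mu=\hat\mu_{M_i(t),i}$, $\hat\omega=\hat\omega_{M_i(t),i}$, $m=M_i(t)$. The numerator agrees with the OLS numerator, since $\sum_r (X_{r,i}-\hat\mu)=0$. The denominator, however, is $D=\sum_r (W_{r,i}-\omega_i)^2=\sum_r (W_{r,i}-\hat\omega)^2+m(\hat\omega-\omega_i)^2$. With your decomposition $X_{r,i}=\mu_i+\beta(W_{r,i}-\omega_i)+\epsilon_r$ ($\beta$ the population slope), one finds
\[
\EE{\hat\mu^{(c)}_{m,i}-\mu_i \mid W}=\frac{\beta\, m(\hat\omega-\omega_i)^3}{D},
\]
which is nonzero in general. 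So conditional unbiasedness given $W$ is false. Your fallback fails too. Given $W$, both $\hat\mu^{(c)}_{m,i}-\mu_i$ and the residuals $\bar X_{r,i}-\hat\mu^{(c)}_{m,i}$ contain a $W$-measurable term. Hence the former is not odd in $\epsilon$, and $B_{t,i}$ is not even in $\epsilon$.

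The repair is to reflect all the data jointly, not just the noise. Apply $(X_{r,i}-\mu_i,\,W_{r,i}-\omega_i)\mapsto-(X_{r,i}-\mu_i,\,W_{r,i}-\omega_i)$ for $r\in\cH_i^c(t)$, together with $X_{n,i}-\mu_i\mapsto-(X_{n,i}-\mu_i)$ for $n\in\cH_i(t)$. This preserves the centered Gaussian law. It leaves $\hat\beta^\star$ (in the literal or the OLS form), $Z_{t,i}$, $A_{t,i}$, $B_{t,i}$, and hence $\hat\lambda_{t,i}$ invariant. Meanwhile $\hat\mu^{(nc)}_{n,i}-\mu_i$ and $\hat\mu^{(c)}_{m,i}-\mu_i$ change sign. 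Therefore $\EE{g(\hat\lambda_{t,i})(\hat\mu^{(c)}_{m,i}-\mu_i)}=0$ for every bounded $g$, and likewise for the no-CV piece. This gives $\EE{\hat\mu_{S_i(t),i}}=\mu_i$, and also exactly the mean-independence $\EE{\hat\mu^{(c)}_{m,i}-\mu_i\mid\hat\lambda_{t,i}}=0$ that the paper's proof asserts. It handles both pieces at once, without Cochran's theorem or linear-model theory. One further point: boundedness of $\hat\lambda_{t,i}$ alone does not give integrability. For Eq.~\eqref{equ:BetaEst}, Cauchy--Schwarz gives $|\hat\beta^\star(\omega_i-\hat\omega)|\le\bigl(\sum_r (X_{r,i}-\hat\mu)^2/m\bigr)^{1/2}$, which supplies it.
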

\begin{proof}
	For an unbiased estimator, we need to show that the expected value of the mean reward estimator is equal to the true value of the mean reward being estimated, i.e., $\EE{\hat\mu_{S,i}^{(n)}} = \mu$.  
	\als{
		&\EE{\hat\mu_{S,i}^{(n)}} =\EE{\hat\lambda_{t,i}\hat\mu_{N_i(t),i}^{(na)} + (1-\hat\lambda_{t,i})\hat\mu_{M_i(t),i}^{(a)}} \\
		&\hspace{1.5mm}=\EE{\hat\lambda_{t,i}\hat\mu_{N_i(t),i}^{(na)}} + \EE{(1-\hat\lambda_{t,i})\hat\mu_{M_i(t),i}^{(a)}} \\
		&\hspace{1.5mm}=\EE{\EE{\hat\lambda_{t,i}\hat\mu_{N_i(t),i}^{(na)}| \hat\lambda_{t,i}}} + \EE{\EE{(1-\hat\lambda_{t,i})\hat\mu_{M_i(t),i}^{(a)}|\hat\lambda_{t,i}}} \\
		&\hspace{1.5mm}=\EE{\hat\lambda_{t,i}\EE{\hat\mu_{N_i(t),i}^{(na)}| \hat\lambda_{t,i}}} + \EE{(1-\hat\lambda_{t,i})\EE{\hat\mu_{M_i(t),i}^{(a)}|\hat\lambda_{t,i}}}\hspace{-1mm}. \\
        \intertext{For IID samples, the sample mean and the sample variance are independent for a normal distribution. Therefore, we have}
		&\hspace{1.5mm}=\EE{\hat\lambda_{t,i}\EE{\hat\mu_{N_i(t),i}^{(na)}}} + \EE{(1-\hat\lambda_{t,i})\EE{\hat\mu_{M_i(t),i}^{(a)}}} \\
		&\hspace{1.5mm}=\EE{\hat\lambda_{t,i} \mu} + \EE{(1-\hat\lambda_{t,i})\mu} \\
		&\hspace{1.5mm}=\mu\EE{\hat\lambda_{t,i}} + \mu - \mu\EE{\hat\lambda_{t,i})} = \mu.   
    }
    \eqs{
        \implies \EE{\hat\mu_{S,i}^{(n)}} = \mu. \qedhere
    }
\end{proof}

Let $\nu_{S_i(t),i}=\Var{\hat{\mu}_{S_i(t),i}}$ be the variance of mean reward estimator $\hat{\mu}_{S_i(t),i}$. Similar to $\lambda_{t,i}$, the value of $\nu_{S_i(t),i}$ is estimated from the observed rewards and available control variate samples, i.e.,
\eq{
    \label{equ:varEst}
    \hat\nu_{S_i(t),i} = \frac{A_{t,i}B_{t,i}}{A_{t,i}+B_{t,i}}.
}
Since $\hat\mu_{N_i(t),i}^{(nc)}$ and $\hat{\mu}_{M_i(t),i}^c$ are independent estimators, we can show that $\hat\nu_{S_i(t),i} = \hat\lambda_{t,i}^2 A_{t,i} + (1-\hat\lambda_{t,i})^2 B_{t,i}$ is also an unbiased variance estimator for a given value of $\hat\lambda_{t,i}$ \cite{Biometrics59_graybill1959combining}. 
Equipped with these results, we next develop an \underline{u}pper \underline{c}onfidence \underline{b}ound (UCB) based algorithm for the MAB-LCV problem.

\subsection{Algorithm: \ref{alg:UCB-LCV}}
Recall that $S_i(t)$ denotes the number of times arm $i$ has been selected by the start of round $t$ and $\hat\nu_{S_i(t),i}$ be an unbiased sample variance of mean reward estimator $\hat\mu_{S_i(t),i}$ defined in Eq.~\ref{equ:estCvxMeanReward}. Motivated by UCB-CV \cite{NeurIPS21_verma2021stochastic}, we define an optimistic upper bound for the mean reward estimate of an arm $i$ as follows: 
\begin{equation}
	\label{equ:UCB}
	\text{UCB}_{t,i} = \hat\mu_{S_i(t),i} + \cV_{t,S_i(t)}^{(\alpha)}\sqrt{\hat\nu_{S_i(t),i}},
\end{equation}
where $\cV_{t,S_i(t)}^{(\alpha)}$ denote $100(1-1/t^\alpha)^{\text{th}}$ percentile value of the $t-$distribution with $S_i(t)-3$ degrees of freedom.
We use the $t-$distribution because the variance is unknown.
Using these UCB indices for arms, we develop an algorithm named \ref{alg:UCB-LCV} for the MAB-LCV problem. The algorithm works as follows: It takes the number of arms $K$, $\alpha>1$ (exploration and exploitation trade-off parameter), and $M=q+2$ (where $q$ is the number of CVs associated with the reward) as input.

\begin{algorithm}[!ht] 
	\renewcommand{\thealgorithm}{UCB-LCV}
	\floatname{algorithm}{}
	\caption{Algorithm for MAB-LCV problem}
	\label{alg:UCB-LCV}
	\begin{algorithmic}[1]
		\STATE \textbf{Input:} $K, ~\alpha>1,$ and $M$
		\STATE Play each arm $i \in [K]$ $Q=M+2$ times
		\STATE If $M_i(t) < M$ then set $\hat\lambda_{t,i}=1$
		\FOR{$t=QK+1, QK + 2, \ldots, $}
		\STATE $\forall i \in [K]:$ compute UCB$_{t,i}$ as given in \eqref{equ:UCB}
		\STATE Select $I_t = \argmax\limits_{i \in [K]}$ UCB$_{t,i}$
		\STATE Play arm $I_t$ and observe $X_{t,I_t}$
		\IF{$W_{t,It}$ is observed and $M_i(t) \ge M$}
		    \STATE Update $\hat\mu_{M_{I_t}(t),{I_t}}^{(c)}$and $B_{t,I_t}$
		\ELSE
		    \STATE Update $\hat\mu_{N_{I_t}(t),{I_t}}^{(nc)}$ and $A_{t,I_t}$
		\ENDIF
		\STATE Re-estimate $\hat\lambda_{t,I_t}$, $\hat\mu_{S_{I_t}(t),{I_t}}$, and $\hat\nu_{S_{I_t}(t), I_t}$
		\ENDFOR
	\end{algorithmic}
\end{algorithm}

\ref{alg:UCB-LCV} plays each arm $Q=M+2$ times to ensure that the sample variance for at least one of the mean reward estimators can be computed (see Lemma~\ref{lem:varEst}). After having $Q$ observations from arm $i$, if $M_i(t) < M$, then the value of $\hat\lambda_{t,i}$ is set to $1$ in the round $t$. Note that the value of $\hat\lambda_{t,i}$ is $0$ for $N_i(t) < 2$. Since we are playing each arm at least $Q$ times, $M_i(t) < M$ and $N_i(t) < 2$ can not be simultaneously true.

In the round $t$, \ref{alg:UCB-LCV} computes the upper confidence bound for the mean reward estimate of each arm using \eqref{equ:UCB} and then selects the arm having the highest UCB value. The selected arm is denoted by $I_t$. After playing arm $I_t$, the reward $X_{t, I_t}$ is observed, and the associated control variate $W_{t, I_t}$ may or may not be observed. If the control variate $W_{t, I_t}$ is observed and $M_i(t) \ge M$ then $\hat\mu_{M_{I_t}(t),{I_t}}^{(c)}$ and $B_{t,I_t}$ are updated; otherwise, $\hat\mu_{N_{I_t}(t),{I_t}}^{(nc)}$ and $A_{t,I_t}$ are updated. Accordingly, the values of $\hat\lambda_{t,I_t}$, $\hat\mu_{S_{I_t}(t),{I_t}}$, and $\hat\nu_{S_{I_t}(t), I_t}$ are re-estimated. A similar process is repeated for the subsequent rounds.

\subsection{Estimator with Multiple Control Variates}
\label{ssec:multiCV}
We now consider the case where each arm can have multiple CVs. Since only the mean reward estimator given in \eqref{equ:estMeanRewardCV} involves control variates, we will adopt this mean reward estimator to the multiple control variates case. We denote the number of CVs by $q$. Let $W_{m, i, j}$ be the $j^{\text{th}}$ control variate of arm $i$ for $m \in \cH_{i}^c(t)$. Then the unbiased mean reward estimator for an arm $i$ with multiple CVs is given as follows:
\eq{
	\label{equ:estMeanRewardMultiCV}
	\hat\mu_{M_i(t),i,q}^{(c)} = \hat\mu_{M_i(t),i} + \boldsymbol{\hat\beta}_{M_i(t),i}^{\star\top} (\boldsymbol{\omega}_{i} - \boldsymbol{\hat\omega}_{M_i(t),i}),
}
where $\boldsymbol{\hat\beta}^\star_{M_i(t),i} = \left(\hat\beta^\star_{M_i(t),i,1}, \ldots, \hat\beta^\star_{M_i(t),q}\right)^\top$\hspace{-2mm}, $\boldsymbol{\omega}_{i} = \left(\omega_{i,1}, \right.$ $ \left.\ldots, \omega_{i,q}\right)^\top$\hspace{-2mm}, and $\boldsymbol{\hat\omega}_{M_i(t),i}= \left(\hat\omega_{M_i(t),i,1}, \ldots, \hat\omega_{M_i(t),i,q}\right)^\top$, where $\hat{\omega}_{M_i(t),i,j} = \frac{1}{M_i(t)} \sum_{m \in \cH_{i}^c(t)} W_{m,i,j}$.

Let $\boldsymbol{W}_i$ be the $M_i(t)\times q$ matrix whose $m^{\text{th}}$ row is $\left(W_{m,i,1}, \ldots, W_{m,i,q} \right)$, and $\boldsymbol{X}_i=\left(\left\{X_{m,i} \right\}_{m \in \cH_i^c(t)}\right)^\top$. Then the optimal vector $\boldsymbol{\hat\beta}_{M_i(t),i}^{\star}$ (\cite{NeurIPS21_verma2021stochastic,OR90_nelson1990control}) is given by 
\eqs{
    \boldsymbol{\hat\beta}_{M_i(t),i}^{\star} = \frac{\boldsymbol{W}_i^\top\boldsymbol{W}_i - M_i(t)\boldsymbol{\hat\omega}_{M_i(t),i}\boldsymbol{\hat\omega}_{M_i(t),i}^\top}{\boldsymbol{W}_i^\top \boldsymbol{X}_i - M_i(t){\boldsymbol{\hat\omega}_i}~\hat\mu_{M_i(t),i}}.
}

We can easily generalize Lemma~\ref{lem:varEst} to get an unbiased variance estimator for mean reward estimator $\hat\mu_{M_i(t),i,q}^{(c)}$
(Lemma~4 of \cite{NeurIPS21_verma2021stochastic}). 
With this, we can now use \ref{alg:UCB-LCV} with $M=q+2$ for the MAB-LCV problem having multiple control variates. Let $\hat\mu_{S_i(t),i,q}$ be the combined estimator as defined in Lemma~\ref{lem:rewardEst} except it uses $\hat\mu_{M_i(t),i,q}^{(c)}$ instead of $\hat\mu_{M_i(t),i}^{(c)}$. Then in \ref{alg:UCB-LCV}, we use the following optimistic UCB index for an arm $i$ in-place of $\text{UCB}_{t,i}$:
\eqs{
	\text{UCB}_{t,i,q} = \hat\mu_{S_i(t),i,q} + \cV_{t,S_i(t),q}^{(\alpha)}\sqrt{\hat\nu_{S_i(t),i,q}},
}
where $\cV_{t,S_i(t),q}^{(\alpha)}$ is the $100(1-1/t^\alpha)^{\text{th}}$ percentile value of the $t-$distribution with $S_i(t)-q-2$ degrees of freedom and $\hat\nu_{S_i(t),i,q}$ is the sample variance of $\hat\mu_{S_i(t),i,q}$.

	\section{Special Cases of MAB-LCV}
	\label{sec:special_cases}

We will consider the following special cases of MAB-LCV problems in this section: MAB-LCV problems having no CVs, having CVs for all reward samples, and having CVs with probability $\epsilon$. 

\subsection{MAB-LCV problems having no CVs}
In this case, there are no control variates at all. This MAB-LCV problem is the same as the stochastic MAB problem with normally distributed rewards. If there are no control variates, then the value of $\hat\lambda_{t,i}$ will be $1$ for all $t\ge 1$ and $i \in [K]$, and $N_i(t)$ will be the same as $S_i(t)$. We refer to this variant of \ref{alg:UCB-LCV} algorithm as UCB-NORMAL. Note that UCB-CV [5] without CVs is not the same as UCB-NORMALbecause it uses the upper confidence bound that assumes the CV is available in every round.
Our next result gives the regret upper bound for UCB-NORMAL.

\begin{restatable}{prop}{noCVs}
	\label{cor:noCVs}
	\label{alg:UCB-Normal}
    Let reward and control variates have a multivariate normal distribution, $q$ be the number of control variates, $\Delta_i = \mu_{i^\star} - \mu_i$ be the sub-optimality gap for arm $i \in [K]$, and $C_{T,i}^{(\alpha)} = \EE{\left({\cV_{T,N_i(T)}^{(\alpha)}}/{\cV_{T,T}^{(\alpha)}}\right)^2}$, where $\cV_{t,s}^{(\alpha)}$ is the $100(1-1/t^\alpha)^{\text{th}}$ percentile of the $t-$distribution critical value with $s-1$ degrees of freedom. Then the regret of UCB-NORMAL with $\alpha=2$ in $T$ rounds is given by 
	\eqs{
		\Regret_T \le \sum_{i \ne i^\star} \hspace{-0.5mm} \left( \frac{4(\cV_{T,T}^{(\alpha)})^2 \sigma_i^2 C_{T,i}^{(\alpha)}}{\Delta_i^2} + \frac{\Delta_i\pi^2}{3} + \Delta_i\right).
	}
\end{restatable}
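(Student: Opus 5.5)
We follow the standard UCB1-NORMAL / UCB-CV analysis, specialised to the case with no control variates. Here $\hat\lambda_{t,i}=1$, $N_i(t)=S_i(t)$, $\hat\mu_{S_i(t),i}=\hat\mu^{(nc)}_{N_i(t),i}$, and $\hat\nu_{S_i(t),i}=A_{t,i}$. The quantity $(\hat\mu^{(nc)}_{N_i(t),i}-\mu_i)/\sqrt{A_{t,i}}$ is exactly Student-$t$ with $N_i(t)-1$ degrees of freedom, because rewards are Gaussian.

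We first write $\Regret_T=\sum_{i\ne i^\star}\Delta_i\EE{N_i(T)}$. For a threshold $\ell_i$ to be chosen, we then bound
\[
N_i(T)\le \ell_i+\sum_{t}\mathbb{1}\{I_t=i,\,N_i(t)\ge \ell_i\}.
\]
On the event $\{I_t=i\}$ we have $\text{UCB}_{t,i}\ge \text{UCB}_{t,i^\star}$. This forces at least one of three events:
\begin{enumerate}[label=(\roman*)]
\item $\text{UCB}_{t,i^\star}\le\mu_{i^\star}$, meaning the optimal arm is underestimated;
\item $\hat\mu_{N_i(t),i}\ge\mu_i+\cV^{(\alpha)}_{t,N_i(t)}\sqrt{A_{t,i}}$, meaning arm $i$ is overestimated;
\item $\mu_{i^\star}<\mu_i+2\cV^{(\alpha)}_{t,N_i(t)}\sqrt{A_{t,i}}$.
\end{enumerate}

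By the exact $t$-distribution of the studentised mean and the choice of $\cV^{(\alpha)}_{t,s}$ as the $1-1/t^\alpha$ quantile, event (i) has probability at most $t^{-\alpha}$ for each fixed count of the optimal arm. Event (ii) has the same bound for each fixed $N_i(t)$. We union-bound over the at most $t$ possible values of each count. With $\alpha=2$ this gives
\[
\sum_t\sum_{s\le t} 2t^{-2}\le \sum_t 2/t\;\cdot\;\ldots
\]
To obtain the clean $\pi^2/3$ we will instead bound, as in the UCB1 proof, with $\sum_t t\cdot t^{-\alpha}$ split over both arms. Concretely, we use $\sum_{t}\sum_{s}\,t^{-2}$ restricted so that the double sum equals $2\sum_t t^{-2}=\pi^2/3$. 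This may require taking $\alpha$ slightly larger in the union bound, or citing the analogous step of UCB-CV from \cite{NeurIPS21_verma2021stochastic}, which I would mirror exactly. The extra $+\Delta_i$ absorbs the rounding in $\ell_i$ and the initial forced plays.

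The main obstacle is event (iii), which is random through both $A_{t,i}$ and $N_i(t)$. Event (iii) requires
\[
N_i(t) < \frac{4(\cV^{(\alpha)}_{t,N_i(t)})^2\,(N_i(t)A_{t,i})}{\Delta_i^2}.
\]
I would bound $\cV^{(\alpha)}_{t,N_i(t)}\le \cV^{(\alpha)}_{T,N_i(T)}$, using that quantiles increase in $t$ and decrease in degrees of freedom. I would then write
\[
(\cV^{(\alpha)}_{T,N_i(T)})^2=(\cV^{(\alpha)}_{T,T})^2\bigl(\cV^{(\alpha)}_{T,N_i(T)}/\cV^{(\alpha)}_{T,T}\bigr)^2 .
\]
Accordingly, I would set $\ell_i$ to be this random threshold times the sample variance $N_iA_{t,i}$, and take expectations. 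Unbiasedness of the sample variance (Lemma~\ref{lem:varEst}) gives $\EE{N_iA_{t,i}}=\sigma_i^2$.

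The delicate point is the factorisation $\EE{(\text{ratio})^2\cdot\text{sample variance}}\approx C^{(\alpha)}_{T,i}\sigma_i^2$. For Gaussian data the sample variance is independent of the sample mean, but it is not obviously independent of $N_i(T)$. I would handle this as UCB-CV does: either treat the product heuristically via independence of the variance estimate from the counts, or upper bound the variance estimate by its expectation on a high-probability event. The leftover failure probability would be folded into the constant terms. Summing $\Delta_i\EE{N_i(T)}$ over suboptimal arms then yields the stated bound.
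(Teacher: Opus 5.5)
Your plan follows the paper's route. The paper's proof is only a pointer to Theorem~1 of UCB-CV, with the CV-free variance and the new $\cV_{T,T}^{(\alpha)}$ substituted. Your UCB1-style split into (i)--(iii), the exact Student-$t$ law at a fixed sample size, and the factorisation producing $C_{T,i}^{(\alpha)}$ are that template. However, the two steps you leave open are where all the work lies, and as written neither goes through.

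\textbf{The constant $\Delta_i\pi^2/3$ at $\alpha=2$.} Events (i) and (ii) have probability $t^{-2}$ only for a \emph{fixed} count. Since the counts are random and adapted, you must union-bound over their at most $t$ values. That gives $\sum_{t\le T}2t\cdot t^{-2}\approx 2\ln T$, and UCB1's triple sum over $(s,s_i)$ gives $\sum_{t\le T}2t^2\cdot t^{-2}=2T$. There is no legitimate way to ``restrict'' the double sum to $2\sum_t t^{-2}$. Raising $\alpha$ to $3$ (one-count union bound) or to $4$ (triple sum) proves a different proposition, because $\alpha=2$ is fixed and also sets the level of $\cV_{T,T}^{(\alpha)}$. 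Deferring to UCB-CV only relocates this step.

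\textbf{Event (iii).} First, $\cV^{(\alpha)}_{t,N_i(t)}\le\cV^{(\alpha)}_{T,N_i(T)}$ does not follow from the monotonicity you cite. Since $N_i(t)\le N_i(T)$, there are \emph{fewer} degrees of freedom at time $t$, hence a \emph{larger} quantile, so the two monotonicities pull in opposite directions. On $\{N_i(t)\ge\ell_i\}$ what you actually get is $\cV^{(\alpha)}_{t,N_i(t)}\le\cV^{(\alpha)}_{T,\ell_i}$, which leads to a deterministic fixed-point choice of $\ell_i$. Getting a quantile at $N_i(T)$ would need something like a last-play argument: take the last round $\tau$ with $I_\tau=i$, where $N_i(\tau)=N_i(T)-1$.

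Second, a random threshold must dominate $4(\cV^{(\alpha)}_{t,N_i(t)})^2N_i(t)A_{t,i}/\Delta_i^2$ at every round in which $i$ is pulled. Its expectation therefore involves a maximum over time of sample variances times a function of $N_i(T)$, all driven by the same samples that drive the algorithm. The identity $\EE{N_iA_{t,i}}=\sigma_i^2$ holds only at a fixed sample size. Independence of the Gaussian sample mean and sample variance says nothing about independence from $N_i(T)$. So $\EE{(\text{ratio})^2\,N_iA_{t,i}}=C^{(\alpha)}_{T,i}\sigma_i^2$ is unproved.

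Your fallback also fails. The event $\{N_iA_{t,i}\le\sigma_i^2\}$ has probability bounded away from $1$ (it tends to $1/2$). Only $\{N_iA_{t,i}\le c\,\sigma_i^2\}$ with $c>1$ is a high-probability event, and using it turns the $4$ into $4c$.

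Finally, $\Delta_i\EE{N_i(T)}$ yields the leading term $4(\cV_{T,T}^{(\alpha)})^2\sigma_i^2C_{T,i}^{(\alpha)}/\Delta_i$, not the printed $/\Delta_i^2$. The printed term is dimensionally inconsistent with the other two. Your closing claim that this ``yields the stated bound'' therefore follows from your estimate only when $\Delta_i\le 1$.
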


\noindent
\textbf{Proof sketch:}
The proof follows similar steps as the proof of Theorem 1 in \cite{NeurIPS21_verma2021stochastic} except the variance of mean reward estimator does not have any CVs and definition of $\cV_{T,T}^{(\alpha)}$ is different and hence the definition of $C_{T,i}^{(\alpha)}$.

\begin{rem}
    For $T=20000$, $q \le 20$ and $N_i(T) > 50$, we have $\frac{\cV_{T,N_i(T)}^{(2)}}{\cV_{T,T}^{(2)}}\leq 1.41$ (see Fig.~\ref{fig:C_Tiq1}) and can have one possible explicit regret bound by replacing $C_{T,i}^{(1)}$ with $2$. However, getting an explicit bound is still hard for a general case.
\end{rem}

\vspace{-3mm}
\begin{figure}[!ht]
	\centering
	\includegraphics[width=0.65\linewidth]{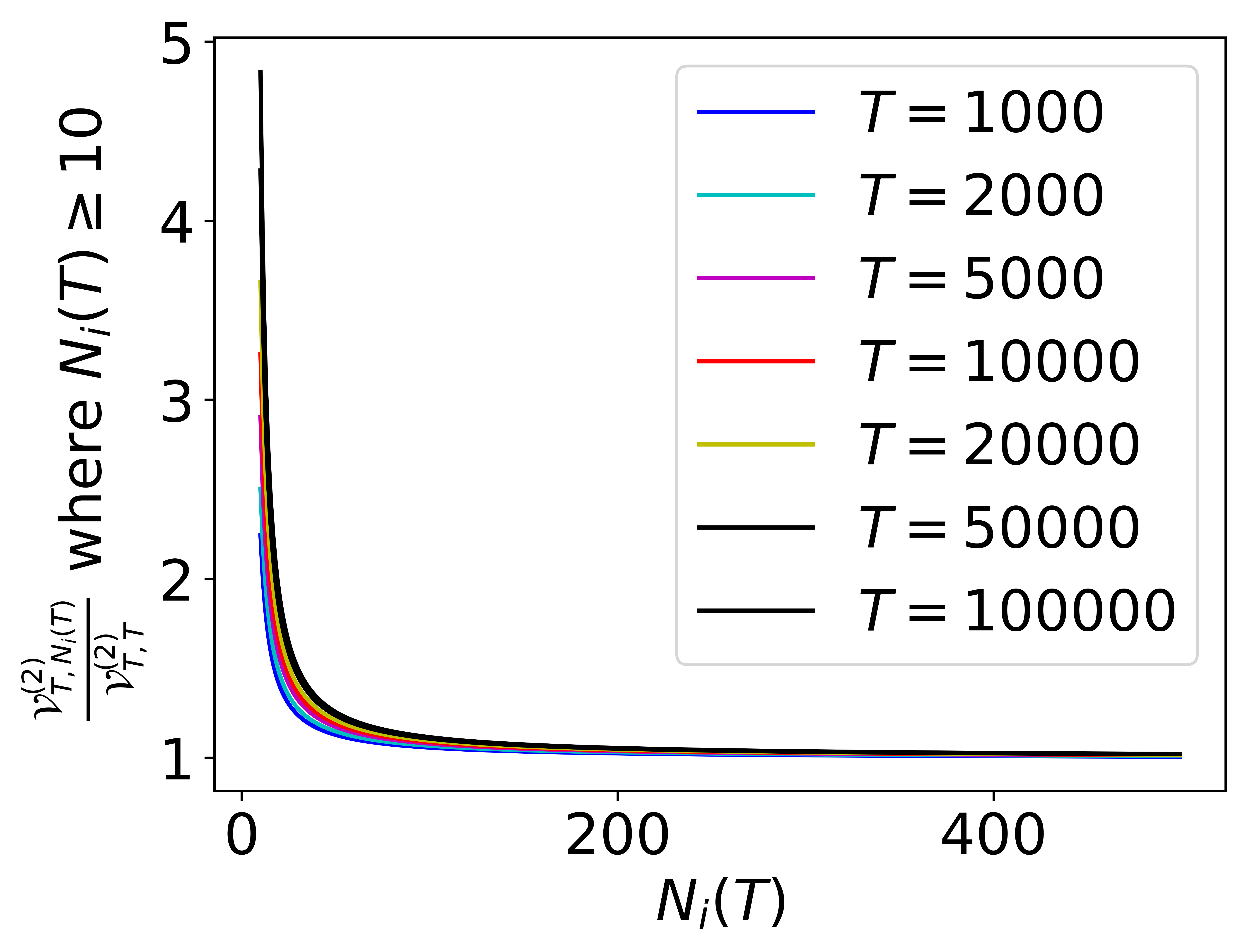}
	\caption{Variation in $\frac{\cV_{T,N_i(T)}^{(2)}}{\cV_{T,T}^{(2)}}$ with $S_i(T)$.}
	\label{fig:C_Tiq1}
\end{figure}

\noindent
\textbf{Comparing UCB with the standard bandit algorithm:}
Since the $t-$distribution critical values do not have an explicit form, it is hard to directly compare the UCB index (having $\cV_{T,T}^{(\alpha)}$ term) of \ref{alg:UCB-LCV} with that of UCB1-NORMAL and kl-UCB (having $\sqrt{\log (T)}$ term). However, we can empirically verify that $\left(\cV_{T,T}^{(\alpha)}\right)^2$ is upper bounded by $3.726 \log (T)$ for any $T > 1$ and $\alpha \le 2$ (see Fig.~\ref{fig:fig1a}). 

\vspace{-3mm}
\begin{figure}[!ht]
	\centering
    \includegraphics[width=0.65\linewidth]{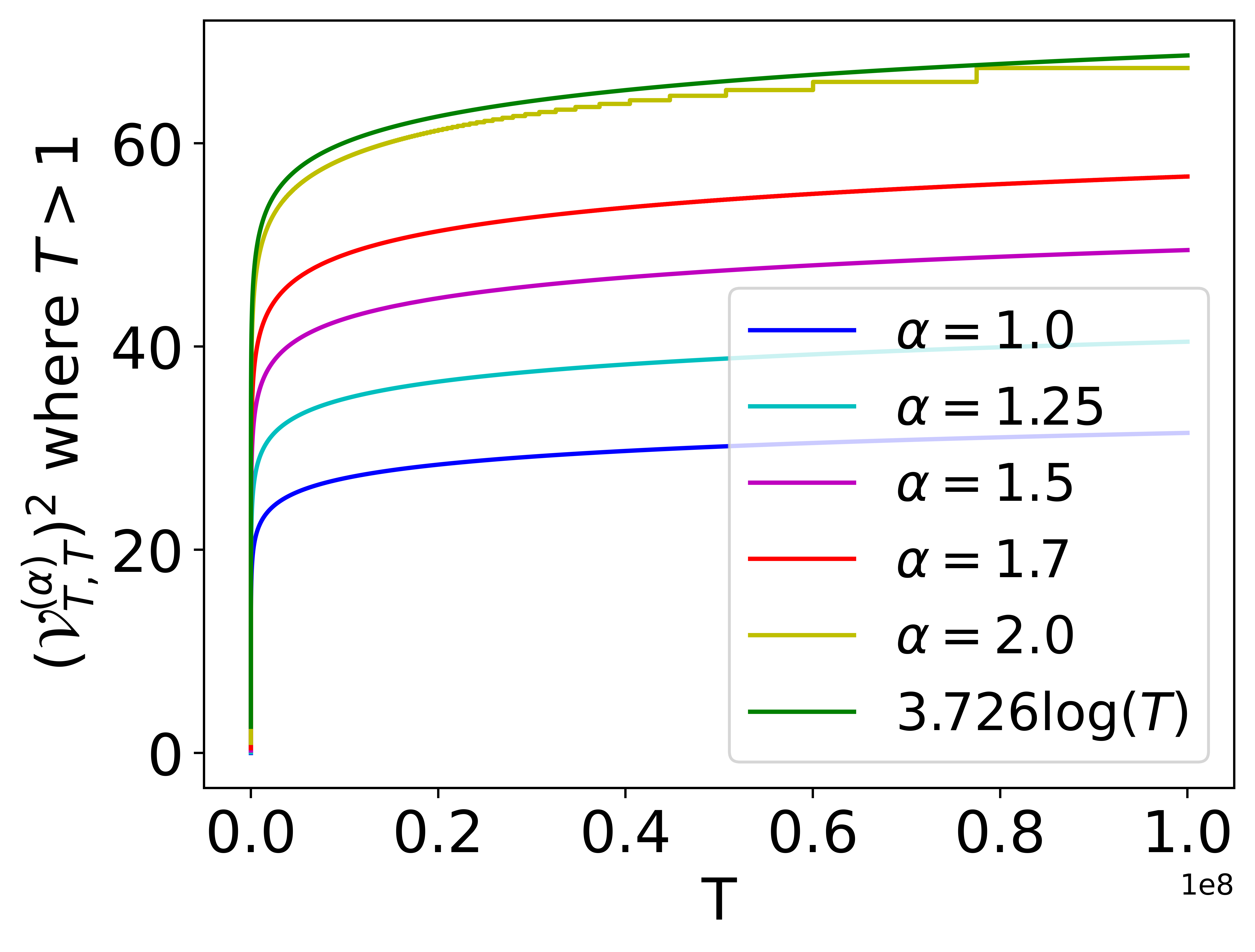}
    \caption{\vspace{-3mm} $\cV_{T,T}^{(\alpha)}$ vs. $\log(T)$.}
    \label{fig:fig1a}
\end{figure}

\subsection{CVs for all rewards samples}
For this case, we assume that the CVs are observed for each reward sample. This MAB-LCV problem is the same as the MAB-CV problem  \cite{NeurIPS21_verma2021stochastic}. When all reward samples have associated CVs, then the value of $\hat\lambda_{t,i}$ will be $0$ for all $t\ge 1$ and $i \in [K]$, and $M_i(t)$ will be the same as $S_i(t)$. The variant of \ref{alg:UCB-LCV} that is used for such MAB-LCV problems exactly behaves the same as the UCB-CV algorithm of \cite{NeurIPS21_verma2021stochastic}.

\begin{rem}[{\bf Regret Analysis of \ref{alg:UCB-LCV}:}] 
   For a fixed value of $\hat\lambda_{t,i}$, the variance estimator defined in Eq.~\ref{equ:varEst} is unbiased. However, it is still an open problem to find an exact unbiased sample variance estimator of the mean reward estimator (as defined in Lemma~\ref{lem:rewardEst}) when $\hat\lambda_{t,i}$ is random. As $\hat\lambda_{t,i}$ is also random in our setting, we cannot give any finite time regret guarantee for \ref{alg:UCB-LCV} and leave it for future work.
\end{rem}

	\section{General Distribution}
	\label{sec:no_dist}

We now consider the general case with no distributional assumptions on reward and associated CVs of arms. In this case also, we can continue the sets $\cH_{i}(t)$ and $\cH_{i}^c(t)$, but the estimator in Eq.~\ref{equ:estMeanRewardCV} obtained using the samples from  $\cH_{i}^c(t)$ need not remain an unbiased estimator. Therefore, we cannot use the $t$-distribution properties to obtain confidence intervals. However, we can use re-sampling methods such as jackknifing, splitting, and batching (see \cite{NeurIPS21_verma2021stochastic} for more details) to reduce the bias of the estimators and develop confidence intervals. Below, we briefly describe how the jackknifing and splitting methods can be applied in \cref{alg:UCB-LCV}.

In Jackknifing (\cite{Book82_efron1982jackknife, TAS83_efron1983leisurely}), $\hat\mu_{N_i(t),i}^{(nc)}$ is obtained as in Eq.~\ref{equ:estMeanReward} using the samples from $\cH_{i}(t)$ for each arm $i \in [K]$. To obtain the estimator using the samples from $\cH_{i}^c(t)$, we compute $M_i(t)=|\cH_{i}^c(t)|$ number of estimators for arm $i$, where $j^{\text{th}} \in [M_i(t)]$ estimator is computed using Eq.~\ref{equ:estMeanRewardCV} after leaving out $j^{\text{th}}$ reward-control variate sample pair from $\cH_{i}^c(t)$. All $M_i(t)$ estimators are averaged to obtain $\hat\mu_{M_i(t),i}^{(c)}$ which is then combined with $\hat\mu_{N_i(t),i}^{(nc)}$ as in Eq.~\ref{equ:estCvxMeanReward}. The sample variance of this new estimator is used to define the UCB of the arms in Eq.~\ref{equ:UCB}. 
In Splitting \cite{OR90_nelson1990control}, for each arm $i$, we define new samples $\bar{X}_{n,i}=X_{n,i} +\hat{\beta}^{\star -n}_{M_i(t),i}(\omega_i-W_{n,i})$ for all $n \in \cH^c_i(t)$, where $\hat{\beta}^{\star -n}_{M_i(t),i}$ is obtained as in Eq.~\ref{equ:BetaEst} after leaving out $n$th reward-control variate sample pair from $ \cH^c_i(t)$. These samples are used to obtain the estimate  $\hat\mu_{N_i(t),i}^{(nc)}=\sum_{n=1}^{M_i(t)}\bar{X}_{n,i}/M_i(t)$. This mean reward estimator is then linearly combined with $\hat\mu_{N_i(t),i}^{(nc)}$ to obtain a new estimator whose sample variance can be used to define the UCB of the arms in Eq.~\ref{equ:UCB}.

    \section{Experiments}
	\label{sec:experiments}

\begin{figure*}
	\centering
	\subfloat[Instance 1]{\label{fig:3a}
		\includegraphics[width=0.25\linewidth]{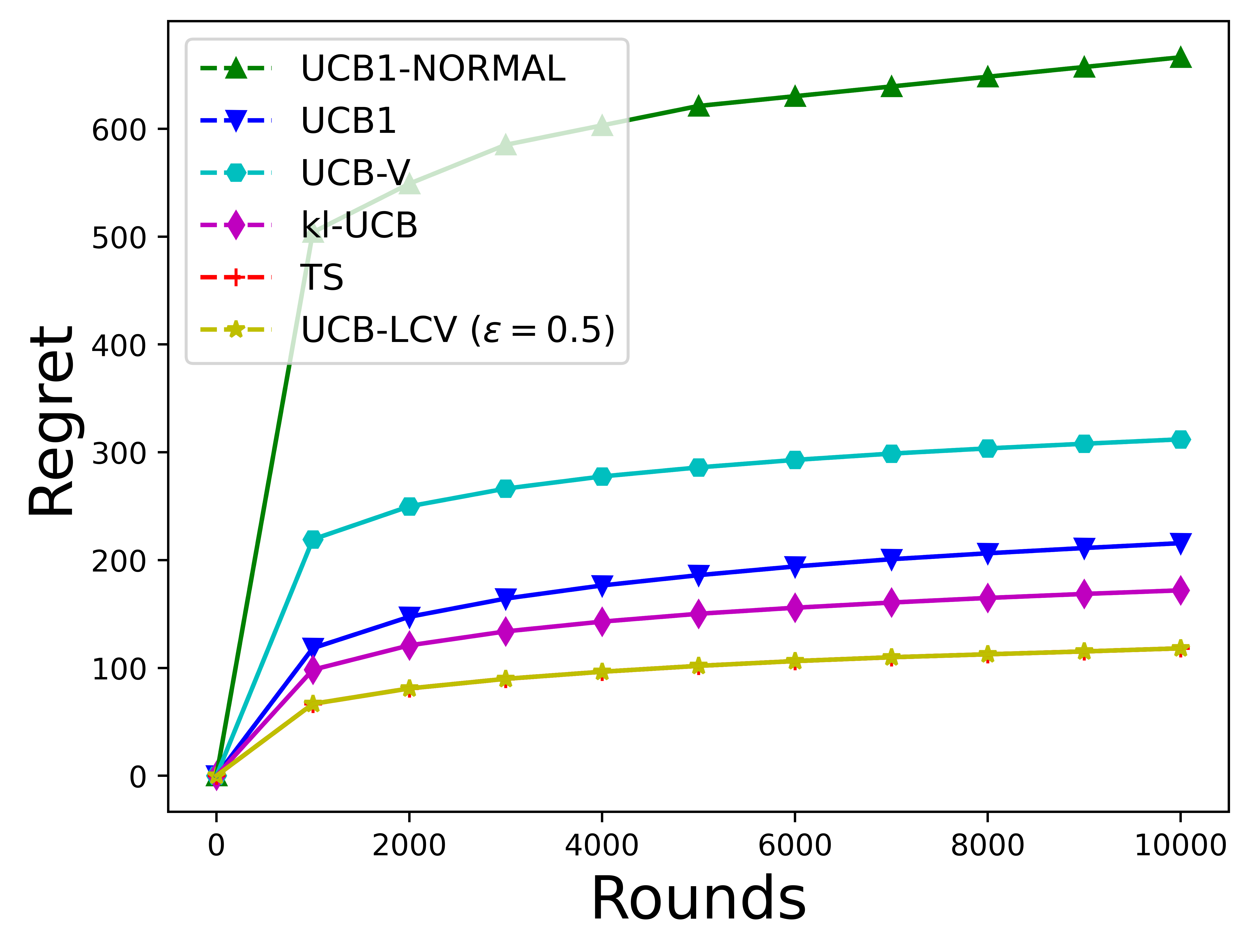}}
	\subfloat[Instance 2]{\label{fig:3b}
		\includegraphics[width=0.25\linewidth]{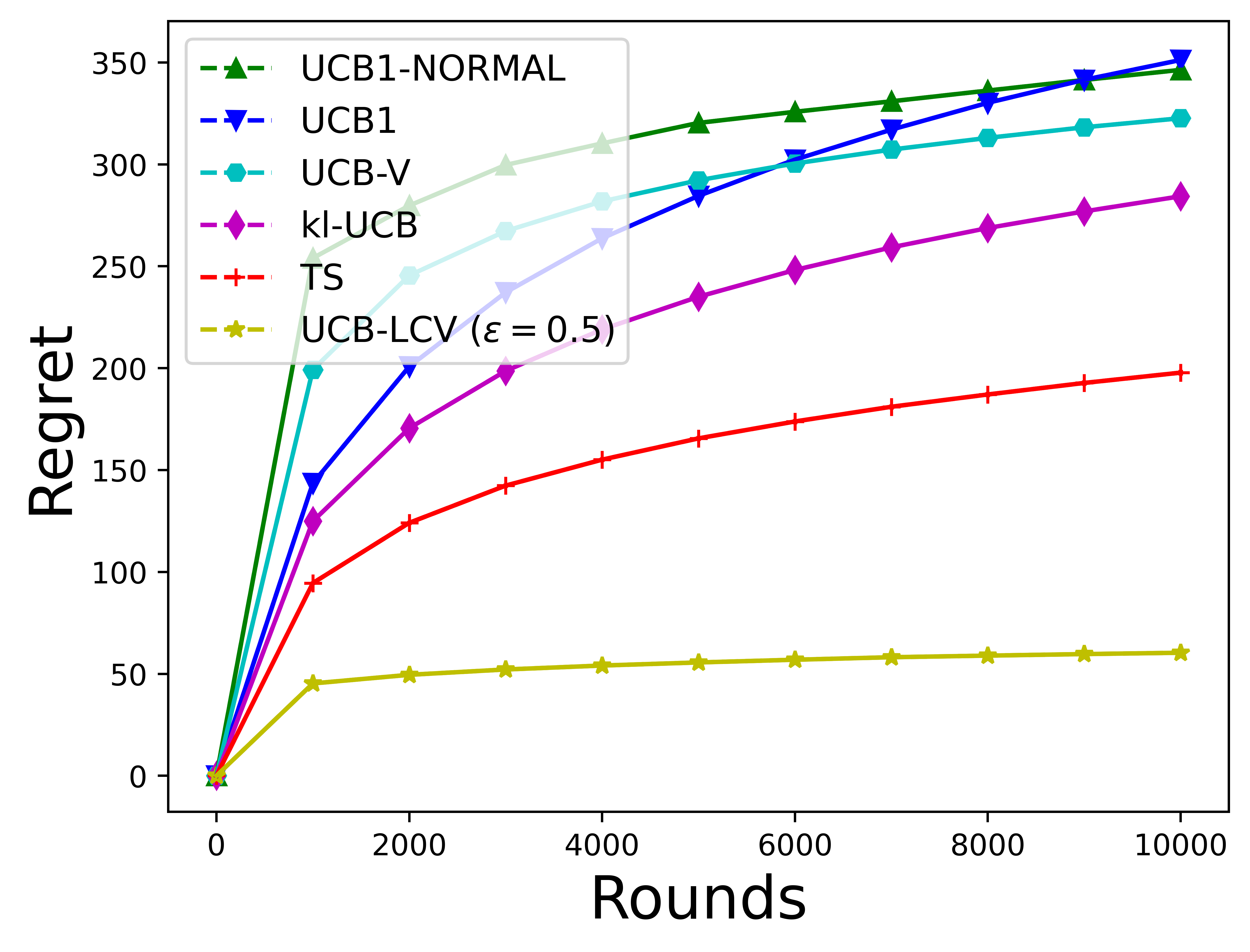}}
    \subfloat[Instance 3]{\label{fig:3c}
		\includegraphics[width=0.25\linewidth]{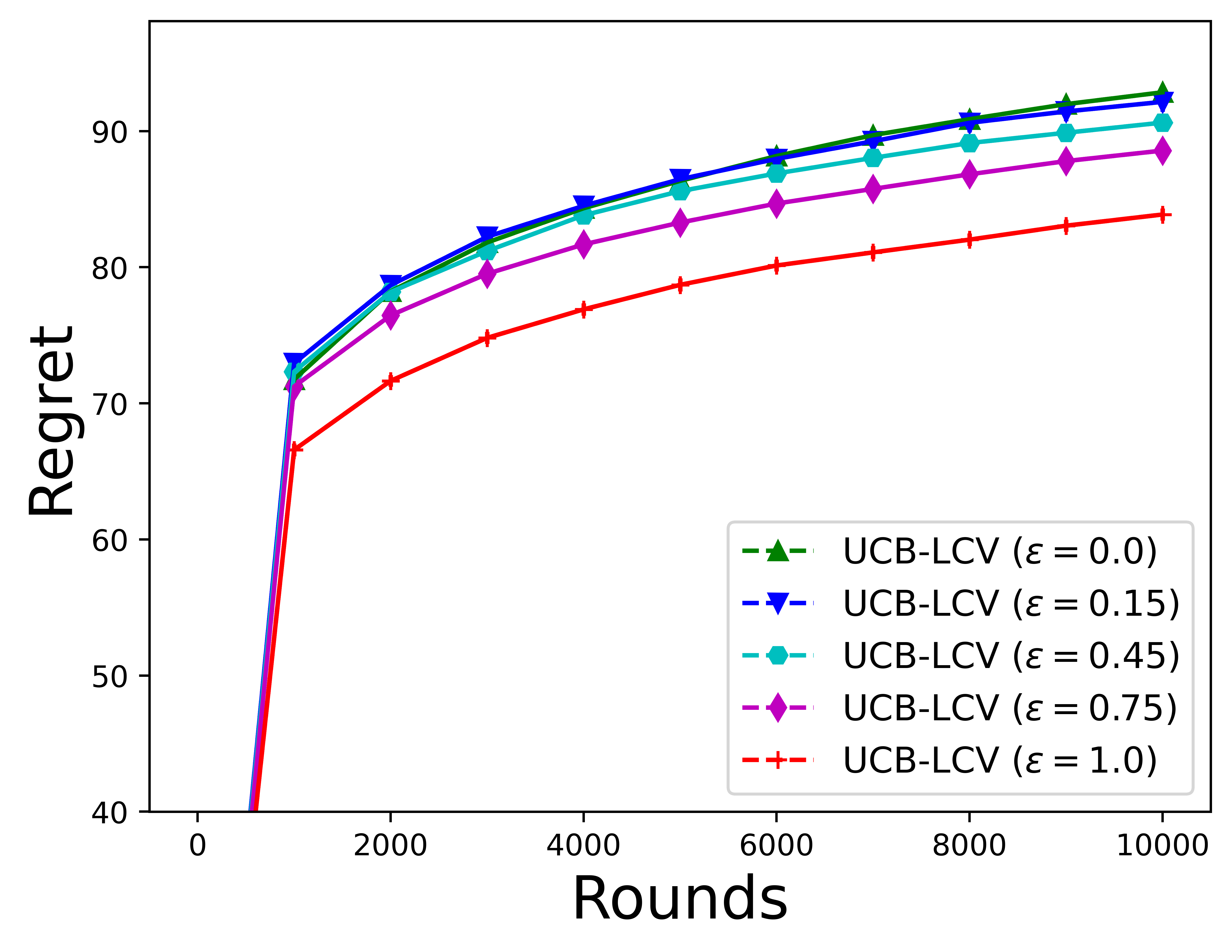}}
	\subfloat[Instance 4]{\label{fig:3d}
		\includegraphics[width=0.25\linewidth]{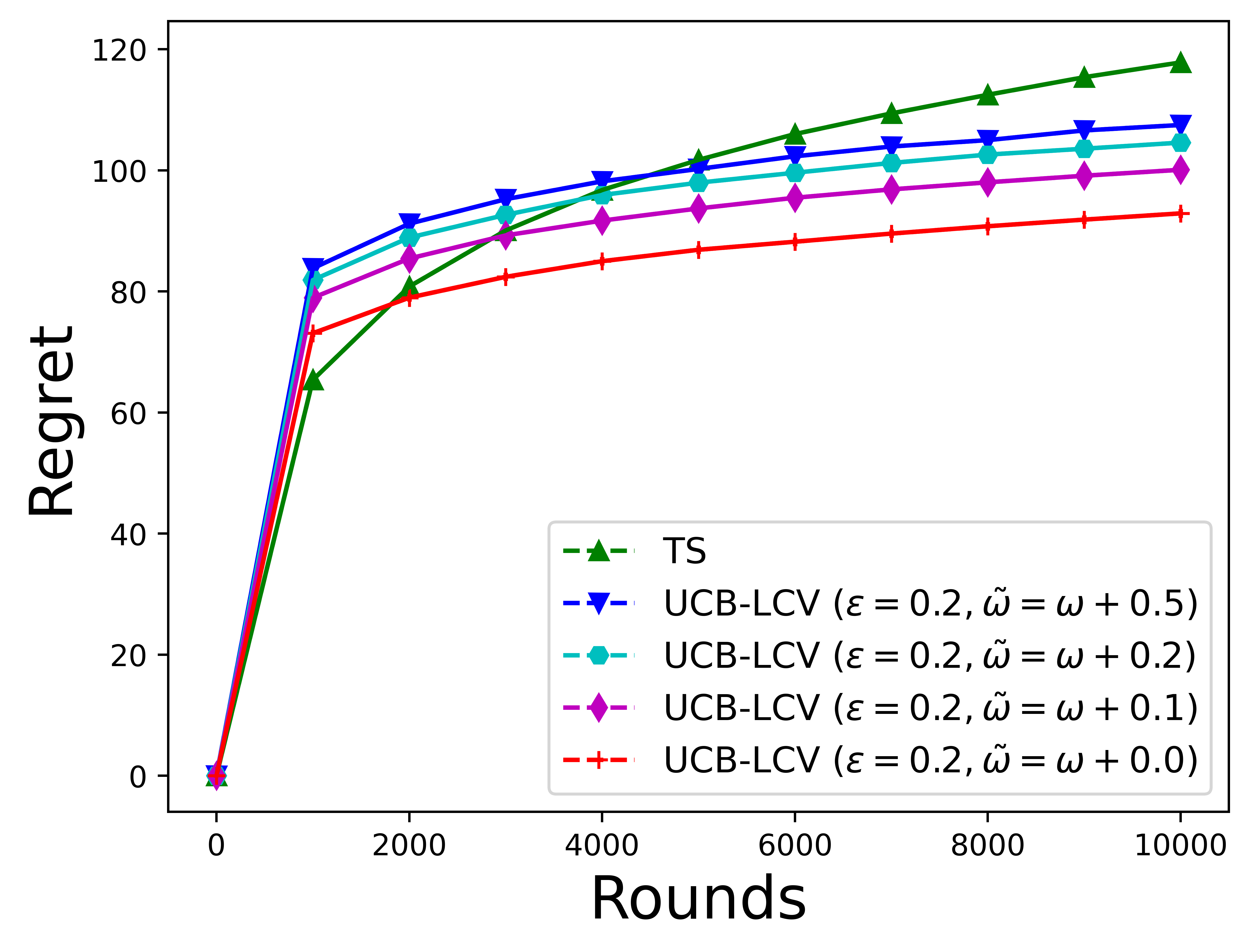}}
	\caption{Regret comparison of \ref{alg:UCB-LCV} with existing bandit algorithms (Fig.~\ref{fig:3a} and Fig.~\ref{fig:3b}).  Regret of \ref{alg:UCB-LCV} vs. the availability of control variate (Fig.~\ref{fig:3c}). Regret of \ref{alg:UCB-LCV} vs. erroneous mean of control variate (Fig.~\ref{fig:3d}). Since the confidence intervals are very small compared to the scale of the y-axis, they are not observable in the plots.}
	\label{fig:3}
\end{figure*}

\begin{figure*}[!ht]
    \vspace{-2mm}
	\centering
	\subfloat[Normal Distribution]{\label{fig:samplemods_a}
		\includegraphics[width=0.3\linewidth]{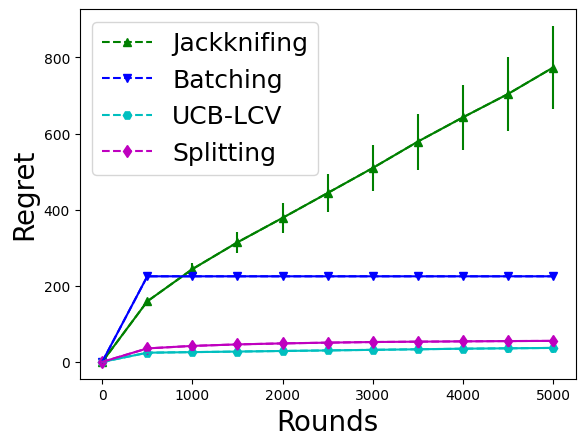}}
	\quad
	\subfloat[Multi-Modal Distribution]{\label{fig:samplemods_b}
		\includegraphics[width=0.3\linewidth]{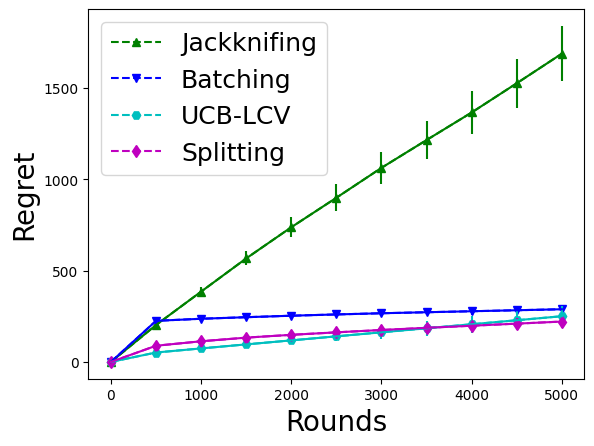}}
	\quad
	\subfloat[Log-normal distribution]{\label{fig:samplemods_c}
		\includegraphics[width=0.3\linewidth]{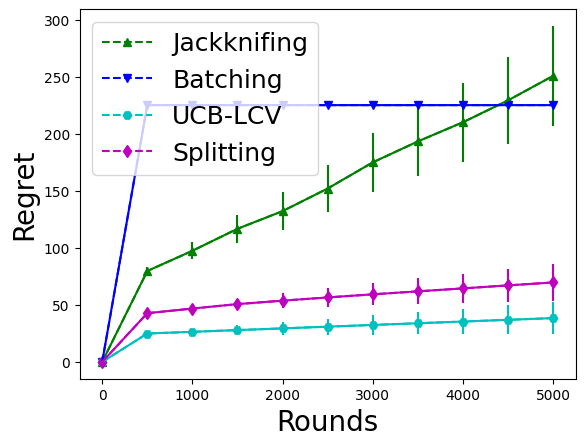}}
	\caption{Regret comparison of different variants of \ref{alg:UCB-LCV} that are based on Jackknifing, Splitting, and Batching methods for MAB-LCV problems with arms having a non-Gaussian distribution of reward and associated CVs. In these experiments, \ref{alg:UCB-LCV} assumes a Gaussian distribution irrespective of the underlying true distribution.}
	\label{fig:samplemods}
    \vspace{-2mm}
\end{figure*}

We empirically evaluate the different performance aspects of the proposed algorithms on synthetic problem instances. The results shown have been averaged over $100$ runs, and the regret bound is shown with a $95\%$ confidence interval (the vertical line on each curve shows the confidence interval). 

\subsection{Gaussian Distribution}
We compare the regret of \ref{alg:UCB-LCV} with UCB1 \cite{ML02_auer2002finite}, UCB1-NORMAL \cite{ML02_auer2002finite}, kl-UCB \cite{AS13_cappe2013kullback}, UCB-V \cite{TCS09_audibert2009exploration}, and Thompson Sampling \cite{AISTATS13_agrawal2013further} on synthetically generated problem instances with arms having Gaussian distributions for rewards and their associated CVs.  For each instance, we use $q=1$, $\alpha=2$, and $\epsilon = 0.5$, where $\epsilon$ denotes the fraction of rewards samples having control variate. Details are as follows:

\noindent
{\it Instance 1:} The reward and control variate in this experiment have a multivariate Gaussian distribution. The reward of each arm has two components. We treat one of the components as a control variate and consider its mean value to be known. In round $t$, the reward of arm $i$ is given by $X_{t,i} = V_{t,i} + Y_{t,i}$, where $V_{t,i} \sim \cN(\mu_{v,i}, \sigma_{v,i}^2)$ and $Y_{t,i} \sim \cN(\mu_{w,i}, \sigma_{w,i}^2)$. Hence, $X_{t,i} \sim \cN(\mu_{v,i}+\mu_{w,i}, \sigma_{v,i}^2 + \sigma_{w,i}^2)$. 
We treat $Y_{i}$ as a control variate associated with $X_i$. It can be easily shown that the correlation coefficient of $X_i$ and $Y_i$ is $\rho_i = \sqrt{\sigma_{w,i}^2/(\sigma_{v,i}^2+\sigma_{w,i}^2)}$. 
For an arm $i$, we set $\mu_{v,i} = 0.1*i$, $\mu_{w,i} = 0.1*i, \forall i \in [K]$. The value of $\sigma_{v,i}^2 = 0.01$ and $\sigma_{w,i}^2 = 0.01$ for all arms. The number of arms ($K$) is set to $10$ in all problem instances.

\noindent
{\it Instance 2:} The problem instance is the same as {\it Instance 1} except each arm has a different mean for the control variate. The mean value for each arm $i$ is set as $\mu_{w,i} = 0.5$. 

\noindent
{\it Instance 3:} This problem instance is also same as {\it Instance 1} except the fraction of rewards samples with control variate ($\epsilon$) varies as $[0.0, 0.15, 0.45, 0.75, 1.0]$. 

\noindent
{\it Instance 4:} We again use the same problem instance as {\it Instance 1} except the mean of the control variate is erroneous (as the error is set to be $[0.0, 0.1, 0.2, 0.5]$. 

\vspace{1mm}
\noindent
\textbf{Regret comparison:}
As expected, \ref{alg:UCB-LCV} outperforms the existing stochastic bandit algorithms for bandit problems with arms having a Gaussian distribution due to exploitation of the available CVs as shown in Fig.~\ref{fig:3a} and \ref{fig:3b}.

\vspace{1mm}
\noindent
\textbf{Variation of regret with different values of $\epsilon$:} 
In our next result, we capture how the amount of control variate influences regret, i.e., the variation in the regret when changing the number of control variate samples available. We consider Instance 3 for this experiment. We assume control variate information is given randomly, and we model it as a Bernoulli distribution with parameter $\epsilon$. The variation in regret while varying the probability of obtaining a control variate ($\epsilon$) with the reward sample in each round is shown in Fig.~\ref{fig:3c}. The regret comes down with the increase in the value of $\epsilon$, showing that having additional control variates always lowers the regret.

\vspace{1mm}
\noindent
\textbf{Variation of regret with estimated mean of CV:} 
We assume that the true mean of CV is known in all our experiments and theoretical results. This assumption may not be practical in many real-life applications. Therefore, we experimentally verify the effect of having only an erroneous mean of the control variate on regret. As shown in Fig.~\ref{fig:3d}, the regret increases with the error and even performs poorly than the existing bandit algorithm for significantly large errors.

\subsection{General Distribution}
We consider the Jackknifing, Splitting, and Batching methods for various distributions as shown in Fig.~\ref{fig:samplemods}. The Bernoulli parameter to determine the availability of the control variate is set to $0.2$. Using the same notation as Instance  1, $\mu_{v,i}$ is set to $0.6 - (i-1)*0.01$ and $\mu_{w,i}$ is $0.8 - (i-1)*0.01$. The distribution considered for Fig.~\ref{fig:samplemods_a} is a Gaussian distribution as described for Instance 1. For Fig.~\ref{fig:samplemods_b}, a multi-nodal distribution is generated by adding or subtracting $0.5$ from the reward with equal probability. Fig.~\ref{fig:samplemods_c} uses log-normal distributions for sampling the control variate, and the reward is generated by combining the log-normal sample with another log-normal sample. The mean and variance for the log-normal distributions are the same as the ones considered for Fig.~\ref{fig:samplemods_b}.

\vspace{1mm}
\noindent
\textbf{Performance of Jackknifing, Splitting, and Batching methods:} We compare the regret of \ref{alg:UCB-LCV} variants that are based on Jackknifing, Splitting, and Batching methods with the vanilla \ref{alg:UCB-LCV} that assumes a Gaussian distribution for rewards and its CVs irrespective of the underlying true distribution. The regret of the batching method is worse for all the instances. Whereas the jackknifing and splitting perform well for heavy-tailed distributions and \ref{alg:UCB-LCV} only performs better for the normal distribution, as shown in Fig.~\ref{fig:samplemods_a}.

	\section{Conclusion}
	\label{sec:conclusion}

In this work, we studied stochastic multi-armed bandits with side information where side information is limited and available in the form of control variates. We developed a method to combine the reward samples with and without control variates effectively. We showed that by appropriately choosing the weight, the resultant new estimator has a smaller variance compared to the case when control variates are not used. We have then empirically demonstrated that our proposed algorithm outperforms existing bandit algorithms for similar settings.
Interestingly, we demonstrated that for the case when no control variates are available, the performance of our algorithm, named UCB-NORMAL, is better than that of UCB1-NORMAL introduced by \cite{ML02_auer2002finite}. On the other extreme, when control variates are available in each round, our algorithm is the same as the UCB-CV algorithm in \cite{NeurIPS21_verma2021stochastic}. 
Proving a regret upper bound of \cref{alg:UCB-LCV} remains an open problem. 
We also assume that the means of the control variates are known. Therefore, an interesting future direction is to see how the performance gets affected when the means are not known precisely or estimated. 
Other promising future directions include extending limited CVs to parameterized bandits \cite{NeurIPS23_verma2024exploiting} and using CVs in contextual bandits with preference feedback (\cite{NeurIPS21_verma2021stochastic,ICLR25_verma2025neural, verma2025active}), with their applications in LLMs such as prompt optimization \cite{arXiv24_lin2024prompt} and active alignment \cite{lin2025activedpo}.

    \section*{Acknowledgements}
    Manjesh K. Hanawal acknowledges funding support from MeitY under the MeitY-NSF Research Collaboration scheme and DST under the DST-INRIA Targeted Programme and DST-RSF cooperation programme.

	\bibliographystyle{IEEEtran}
	\bibliography{ref}

\end{document}